\documentclass[11pt]{article}
\usepackage[preprint]{acl}
\usepackage{times}
\usepackage{latexsym}
\usepackage[T1]{fontenc}
\usepackage[utf8]{inputenc}
\usepackage{microtype}
\usepackage{inconsolata}
\usepackage{url}
\usepackage{graphicx}
\usepackage{booktabs}
\usepackage{amsmath}
\usepackage{amsthm}
\usepackage{enumitem}
\newtheorem{proposition}{Proposition}

\title{One Score, Two Decisions:\\ Selective Prediction on the Rare-Disease Tail}
\author{%
  \bfseries
  Zhaoyang Jiang\textsuperscript{1} \quad
  Zhizhong Fu\textsuperscript{3} \quad
  Yunsoo Kim\textsuperscript{1} \quad
  Zicheng Li\textsuperscript{4} \\[0.3em]
  \bfseries
  Xuanqi Peng\textsuperscript{1} \quad
  Fei Teng\textsuperscript{1} \quad
  Jiacong Mi\textsuperscript{2} \quad
  Honghan Wu\textsuperscript{1}\thanks{Corresponding author.} \\[0.5em]
  \normalfont
  \textsuperscript{1}School of Health \& Wellbeing, University of Glasgow, Glasgow, UK \\
  \textsuperscript{2}Department of Respiratory and Critical Care Medicine, Shanghai Sixth People's Hospital, \\
  Shanghai Jiao Tong University School of Medicine, Shanghai, China \\
  \textsuperscript{3}School of Life Science and Technology, \\
  University of Electronic Science and Technology of China, Chengdu, China \\
  3167645J@student.gla.ac.uk,\quad
  Honghan.Wu@glasgow.ac.uk,
}
\begin{document}
\maketitle

\begin{abstract}
Given a patient's clinical findings, a diagnostic system ranks possible diseases and must decide when to
endorse its first prediction or defer it for review. This decision is usually made by thresholding the top
score. Selective prediction over ranked outputs begins with two checks. First, the ranker must produce enough
correct top-ranked predictions to make the target feasible. Across $2{,}000$ patient records stratified by
disease prevalence, eight small open-weight LLMs achieve at most $4.6\%$ Recall@1 on ultra-rare diseases. At
$10\%$ coverage, even a perfect confidence ranking of their existing predictions therefore cannot reach
$50\%$ selective accuracy. More accurate models pass the same check, showing that the limit is
regime-specific. Second, the confidence signal must match the decision being made. For fixed-candidate
rankers, the top-two margin cancels components shared across candidates. On phenotype-only Exomiser, it
selects $10\%$ of cases at $29.0\%$ accuracy, compared with $13.3\%$ overall, while the top score provides no
reliable gate. Yet that cancellation can remove information needed to detect whether the candidate list
contains an answer. SciFact retrieval and biomedical entity linking confirm this distinction. Finally, we
prove that unlabelled scores alone cannot determine whether switching to the margin will help.
\end{abstract}

\begin{figure*}[t]
\centering\includegraphics[width=\textwidth]{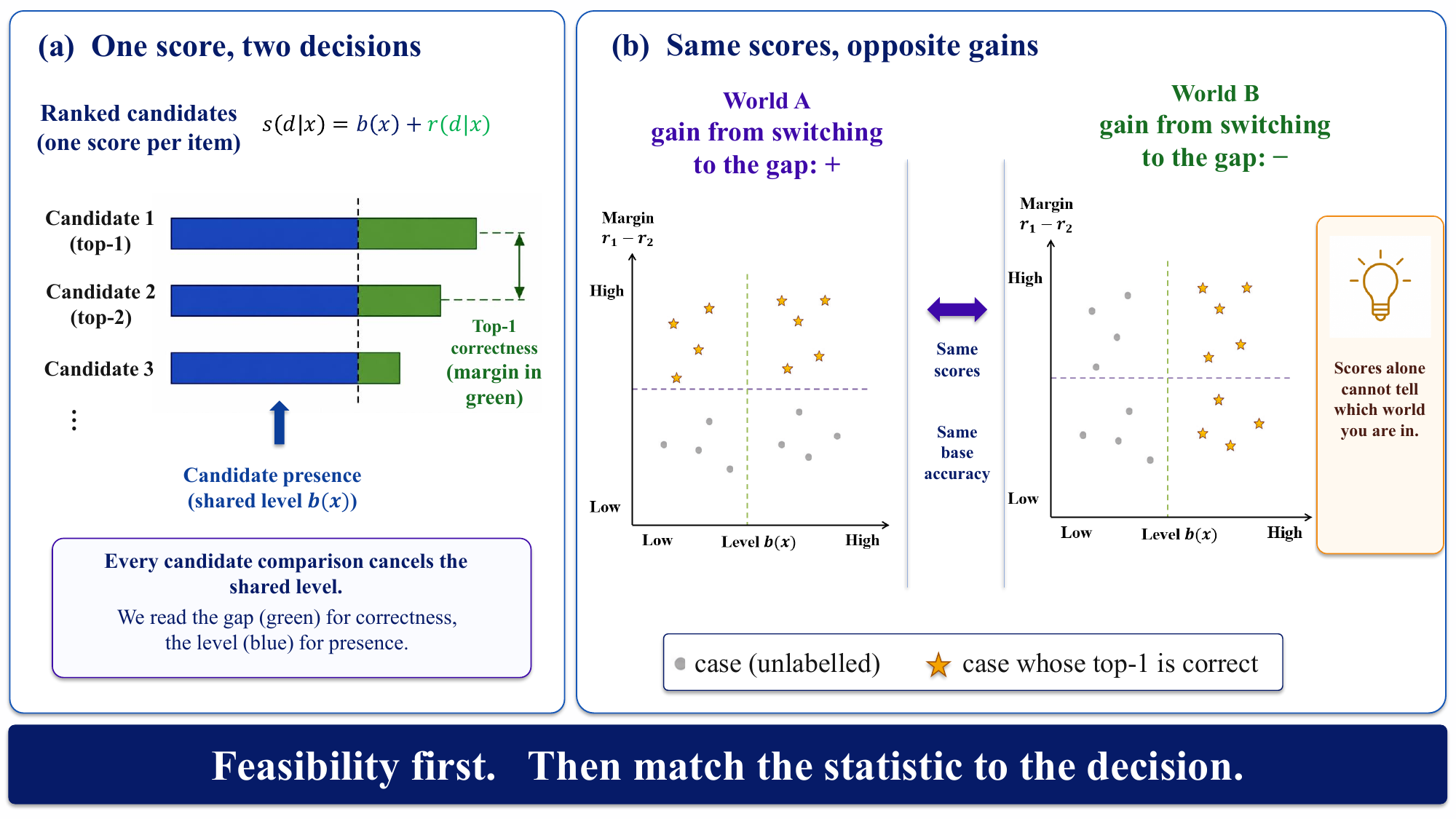}
\caption{Two of the paper's three claims; the first, feasibility, is Eq.~\eqref{eq:ceiling} and
\S\ref{sec:calib}. \textbf{(a)} Eq.~\eqref{eq:scoredecomp} splits each score into a level shared by every
candidate for that case and a candidate-specific residual. We find correctness read from the gap the
residuals leave between the leaders, while presence needs case-level information that the gap discards and
that, by Eq.~\eqref{eq:contrast}, no zero-sum contrast over the candidates can recover
(\S\ref{sec:transfer}). \textbf{(b)} Two
joint laws with the same unlabelled scores and the same base accuracy give gains of opposite sign, so no
functional of those scores identifies which gate is better (Prop.~\ref{prop:unident}).}
\label{fig:overview}
\end{figure*}

\section{Introduction}
\label{sec:intro}
Rare-disease diagnosis is a ranking problem. Given a patient's phenotypes (structured findings such as
seizures or short stature), a diagnostic system must order thousands of possible diseases. Large language
models (LLMs) can turn that short input into a broad differential diagnosis
\citep{singhal2023clinical,kanjee2023accuracy,chen2024rarebench}, which makes them attractive on the
ultra-rare tail, where diagnostic support is needed most.

A differential only helps if the clinician knows when to trust the system's first suggestion. Selective
prediction handles this by endorsing the top prediction on high-confidence cases and deferring the rest for
review \citep{chow1970optimum,elyaniv2010foundations,geifman2017selective}, which in our setting withholds endorsement of the first candidate without hiding the
remaining differential: a confident error leads to unnecessary testing and counselling, while excessive abstention
makes the system unhelpful. The same top score is
routinely thresholded for two different decisions, whether the top candidate is correct and whether any
correct candidate is present \citep{hendrycks2017baseline,sevgili2022neuralel}. We show that the two need
different checks and different signals.

The first check is feasibility. If a predictor is correct on a fraction $p$ of cases and the system answers a
fraction $c$, even a perfect confidence ranking cannot reach selective accuracy above $\min(1,p/c)$. A target above that ceiling cannot be reached by
recalibrating or rescoring the same predictions, because confidence can reorder correct answers but cannot
create them. The check matters on distributional tails, where pooled
accuracy hides a much lower tail accuracy \citep{ransohoff1978spectrum}.

For predictors that pass, the signal depends on the decision. A top score can carry a case-level component
shared by every candidate, and the gap to the runner-up removes it. That helps in deciding whether the leader
is correct, since the gap measures its separation from the nearest alternative. But the shared level is what indicates whether the candidate
set fits the input at all, as when a retriever holds no relevant document or a mention has no entry in the
knowledge base, so one subtraction can help correctness and hurt presence. Nor does any other comparison
among the candidates recover it: every zero-sum contrast cancels the level exactly, so a system that needs it
must look outside the contest. Figure~\ref{fig:overview} summarises the distinction and
\S\ref{sec:framework} states it formally.

We test this on a uniform $2{,}000$-case sample of $10{,}374$ patient records, read by the Orphanet prevalence
class of each diagnosis. Across eight small open-weight LLMs ultra-rare Recall@1 is at most $4.6\%$, so at
$10\%$ coverage even perfect confidence is capped at $46\%$. Two of five medical-specialised models clear a $50\%$
target at the point estimate and across their whole interval, and the three frontier configurations reach
$15.8$--$22.4\%$, making the ceiling non-binding, so the limit is specific to low-accuracy predictors rather
than to confidence.

In that regime phenotype rankers, which compare a patient's findings with curated disease profiles, are the
stronger starting point under our leave-source-out control. On phenotype-only Exomiser the top score yields
no usable operating point while the gap selects $10\%$ of ultra-rare cases at $29.0\%$ against a $13.3\%$ base
rate. Subtracting the shared component from a retriever we can recompute lifts its top-score gate, with a
post-hoc temperature, to the level of its gap, testing the mechanism rather than inferring it. The gap also
beats the raw score for correctness on SciFact retrieval and on entity linking with an unnormalized scorer.
Masking relevant candidates then separates the two: on SciFact the gap becomes the worst of three presence
signals, and on entity linking it is never the best. Finally, we prove that
unlabelled scores, even at fixed base accuracy, cannot determine the gain from switching.

Code is available at
\url{https://github.com/Anonymous-Awesome-Submissions/one_score_two_decisions}.

\section{Related Work}
\paragraph{Selective prediction and deferral.} Calibration and uncertainty estimation sharpen the confidence
attached to a \emph{fixed} set of top-1 predictions
\citep{xin2021abstention,kamath2020selective,wen2025abstention,guo2017calibration,ovadia2019trust,farquhar2024semantic,kuhn2023semantic,angelopoulos2023conformal,varshney2022selective}:
they select among existing answers without changing them, so what they reach is bounded by
Eq.~\eqref{eq:ceiling}. Deferral, cascades and adaptive retrieval work in the other regime, routing a case to
a second predictor and changing the base accuracy itself
\citep{mozannar2020defer,jitkrittum2023cascade,jeong2024adaptiverag,ni2024overconfidence,feng2024abstain}. We
read the oracle ceiling \citep{rabanser2023private} as the test that tells the regimes apart.

\paragraph{Confidence for ranked outputs.} The top score, the gap between the leaders and trust scores over
local geometry are established signals
\citep{jiang2018trust,scheffer2001margin,dalitz2009knn,lowe2004sift,liang2024selective}, as is subtracting a
per-input nuisance: by query length, an impostor cohort, the number of query terms in phenotype-driven
diagnosis, or a normalization that makes scores comparable across queries
\citep{zhou2007wig,shtok2012nqc,auckenthaler2000tnorm,schulz2011exact,karakos2020detection,bogolin2022qbnorm}.
All of it removes the shared level \emph{to rank better}; our question is which decision still needs it, which
Eq.~\eqref{eq:scoredecomp} makes testable. Concurrently \citet{wagner2026twoaxes} separate correctness from
answerability with two internal LLM signals, and \citet{wang2025targ} gate retrieval on a top-two margin under
a monotone link \S\ref{sec:ranklimit} shows cannot be chosen from unlabelled scores; we study the split in the
scores a ranker already emits.

\paragraph{Long-tail knowledge.} That literature defines rarity by corpus frequency
\citep{kandpal2023longtail,razeghi2022impact,mallen2023trust,ni2025popularity,sun2024headtotail}; we stratify
by Orphanet prevalence. Holding corpus frequency fixed leaves the per-disease gap essentially intact, on too few matched pairs to
remove the exposure explanation rather than bound it (App.~\ref{app:macro}).

\paragraph{Rare-disease diagnosis.} Phenotype prioritizers rank diseases against curated profiles
\citep{kohler2009phenomizer,smedley2015exomiser,robinson2020lirical,jagadeesh2019phrank}; general-purpose LLMs
trail them \citep{reese2026benchmarking}, and specialised or agentic systems add training, retrieval and tools
without prevalence-stratified or decontaminated evaluation \citep{yang2025rareseek,zhao2025deeprare}. A
meta-analysis calls for the prevalence-stratified design we adopt \citep{nguyen2026metaanalysis}.
\citet{elmofty2026retrieval} find the crossover turning on retrieval \emph{coverage}, whether the answer
enters the pool at all; our $p$ asks whether the predictor already places it first, which is what decides the
feasible operating points (App.~\ref{app:margin}).

\section{Selective Prediction over Ranked Outputs}
\label{sec:framework}
Selective prediction is usually posed as a choice of confidence score. For ranked outputs that question
comes too early: one must first ask whether the predictor has produced enough correct top-ranked answers to
reach the operating point, and then what the gate is meant to certify, that the leading candidate is correct
or that the candidate set contains a valid answer at all. Three steps follow: a feasibility test, a
decision-specific decomposition of the scores, and a limit on what unlabelled scores can settle.

\subsection{Feasibility Before Confidence Estimation}
\label{sec:feasibility}
Consider $N$ cases. Let $y_i\!=\!1$ if the predictor's top-ranked answer is correct on case $i$ and $0$
otherwise, so its base accuracy is $p=\frac{1}{N}\sum_i y_i$. A confidence rule $q$ selects an answered set
$A_q$ with realized coverage $c=|A_q|/N$, including any effect of tied scores, and selective accuracy
$\operatorname{Acc}_{\mathrm{sel}}(q,c)=\frac{1}{|A_q|}\sum_{i\in A_q}y_i$. The answered set cannot contain more
correct predictions than exist in the whole sample, nor more than it has cases,
\[
\sum_{i\in A_q} y_i\;\leq\;\min\Big(|A_q|,\ \textstyle\sum_{i=1}^{N} y_i\Big),
\]
which on dividing by $|A_q|$ gives
\begin{equation}
\operatorname{Acc}_{\mathrm{sel}}(q,c)\;\le\;\min\!\left(1,\frac{p}{c}\right).
\label{eq:ceiling}
\end{equation}
This is the standard oracle ceiling behind selective prediction
\citep{chow1970optimum,elyaniv2010foundations,geifman2017selective,rabanser2023private}, used here as a
feasibility test rather than claimed as a new bound. If deployment requires selective accuracy $\tau$ at
coverage $c$, Eq.~\eqref{eq:ceiling} implies the necessary condition
\begin{equation}
p \;\geq\; \tau c .
\label{eq:feasibility}
\end{equation}
When it fails, no recalibration, uncertainty estimator or confidence score that leaves the predictor's
answers unchanged can reach the target, because the required correct answers do not exist to be selected. The
remedies are all of a different kind: a more accurate predictor, a larger returned candidate set, or a target
the deployment can afford to relax. When the condition holds the target is merely possible, since the gate
must still place the correct cases first. For a top-$k$ decision the same argument applies with $y_i$ redefined as whether the
gold answer appears in the top $k$, so $p$ becomes Recall@$k$. We read the condition across a range of $\tau$
\citep{whitehead2022reliable}, labelling it at $\tau{=}50\%$ where a single point helps
\citep{vickers2006decision}.

\subsection{Correctness and Candidate Presence}
\label{sec:signals}
Now let one scorer evaluate a fixed candidate set $D$. For a case $x$ let $d_1(x)=\arg\max_{d\in D}s(d\mid
x)$ be the leading candidate and $d_2(x)$ the runner-up, with scores $s_1(x)$ and $s_2(x)$. Writing
$\operatorname{rel}(d,x)\!=\!1$ when candidate $d$ is correct for $x$, two labels arise:
\[
\begin{aligned}
y_{\mathrm{corr}}(x)&=\operatorname{rel}\big(d_1(x),x\big),\\[-1pt]
y_{\mathrm{pres}}(x)&=\max\nolimits_{d\in D}\operatorname{rel}(d,x).
\end{aligned}
\]
The first asks whether the leader should be endorsed, the second whether the candidate set is usable at all.
Correctness implies presence but not conversely: a retriever can hold a relevant document without ranking it
first, and a linker can rank confidently when the right entity is absent from its knowledge base. The $p$ of
\S\ref{sec:feasibility} is the base rate of $y_{\mathrm{corr}}$. Both decisions are usually gated by the same
number, the top score $q_{\mathrm{top}}(x)=s_1(x)$, and that becomes consequential when scores carry a
component shared by every candidate for the same case. Suppose
\begin{equation}
s(d\mid x)=b(x)+r(d\mid x),
\label{eq:scoredecomp}
\end{equation}
with $b(x)$ a case-level offset and $r(d\mid x)$ what distinguishes candidates. The top score retains both,
$q_{\mathrm{top}}(x)=b(x)+r_1(x)$, whereas the margin $q_{\mathrm{margin}}(x)=s_1(x)-s_2(x)$ removes the
shared part:
\begin{equation}
q_{\mathrm{margin}}(x)=r_1(x)-r_2(x).
\label{eq:margincancel}
\end{equation}
Nothing here is special to the top two. For any weights $w$ over $D$ summing to zero,
\begin{equation}
\sum_{d\in D} w_d\, s(d\mid x)\;=\;\sum_{d\in D} w_d\, r(d\mid x),
\label{eq:contrast}
\end{equation}
so \emph{every} zero-sum contrast among one case's candidates cancels $b(x)$ identically. The weights may
depend on the candidates' rank order, which $b(x)$ does not change, so this covers the margin,
$w=(1,-1,0,\dots,0)$, and equally the statistic one would reach for to \emph{estimate} the case level, the
top score against the mean of the field, $w=(1,-\tfrac{1}{K-1},\dots,-\tfrac{1}{K-1})$. The top score of course still carries $b(x)$, but
carries it added to $r_1(x)$, and no comparison among the candidates can separate the two. Isolating a level
that is comparable across cases therefore needs something the contest does not contain: a scale already
fixed across cases, as a bounded cosine has, or a case-side estimate of the offset, for which
\S\ref{sec:transfer} uses a query's own IDF mass.

Splitting $b(x)=\nu(x)+e(x)$ turns this into a decision-specific hypothesis rather than a universal
ordering: a nuisance $\nu$, moving for reasons
unrelated to whether any candidate is correct, and evidence $e$ about whether one is. A contrast discards
both, which should help $y_{\mathrm{corr}}$ and hurt $y_{\mathrm{pres}}$; a case-side correction discards
$\nu$ and keeps $e$, so it should help both; the raw score keeps both, and can serve $y_{\mathrm{pres}}$ only
where $\nu$ is small. Candidate-dependent noise survives every subtraction. Which regime a given scorer is in
is an empirical question rather than a consequence of the algebra, and \S\ref{sec:scope} settles it for one
system by removing $b(x)$ from it.

The same point has a probabilistic reading. If the scores are unnormalized log-potentials, so that
$P_T(d\mid x)\propto\exp(s(d\mid x)/T)$, then
\begin{equation}
s_1(x)-s_2(x)=T\log\frac{P_T(d_1\mid x)}{P_T(d_2\mid x)} ,
\label{eq:marginodds}
\end{equation}
the log-odds of the leader against its nearest competitor up to temperature: the normalizer and every
candidate-independent offset cancel, as in conditional logit and partial likelihood
\citep{mcfadden1974conditional,cox1972regression,liang2024selective,hengsoh2025rlog}. Under $s(d\mid
x)\mapsto\alpha s(d\mid x)+\beta(x)$ with $\alpha>0$ the top score moves with the arbitrary case-level
$\beta(x)$ while the margin keeps its ordering of cases, and so its risk--coverage curve, unchanged
(App.~\ref{app:margin_invariance}).

\subsection{Limits of Label-Free Gate Selection}
\label{sec:ranklimit}
Write
$G(c)=\operatorname{Acc}_{\mathrm{sel}}(q_{\mathrm{margin}},c)-\operatorname{Acc}_{\mathrm{sel}}(q_{\mathrm{top}},c)$
for the gain at coverage $c$. One would like to estimate $G$ before switching, from the scores alone. That is
not a matter of finding the right diagnostic.

\begin{proposition}[Unidentifiability from unlabelled scores]
\label{prop:unident}
Let $P_S$ be the law of the candidate scores on unlabelled inputs and $T$ any functional of $P_S$. Two joint
laws can agree on $P_S$ and on base accuracy while $G(\tfrac12)$ takes opposite signs; no such $T$
identifies $G$.
\end{proposition}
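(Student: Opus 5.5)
The proof is a construction. I will fix one law $P_S$ for the unlabelled scores and attach two different correctness labelings to it, chosen so that both have the same base accuracy while the sign of $G(\tfrac12)$ flips. Once that is done, any functional $T$ of $P_S$ takes the same value under both joint laws but would have to report gains of opposite sign, so no such $T$ identifies $G$.

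\textbf{Score law.} Take four equiprobable case types, each with two candidates, and write the score pair as $(s_1,s_2)$:
\begin{itemize}
\item type $a$: $(3,0)$, so the top score is high and the margin is high;
\item type $b$: $(5,4)$, so the top score is high and the margin is low;
\item type $c$: $(1,-2)$, so the top score is low and the margin is high;
\item type $e$: $(0,-0.5)$, so the top score is low and the margin is low.
\end{itemize}
The two gates then order cases as follows.
\begin{itemize}
\item The top-score gate ranks the types $b>a>c>e$. At coverage $\tfrac12$ it answers exactly the types $\{a,b\}$.
\item The margins are $3,1,3,0.5$ respectively. The margin gate therefore answers exactly $\{a,c\}$ at coverage $\tfrac12$.
\end{itemize}
Both sets have mass exactly $\tfrac12$ and there are no ties at the cut, so realized coverage equals $\tfrac12$ for both gates.

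\textbf{Two labelings.} Correctness can be made a deterministic function of the type; this is compatible with any $P_S$, since labels are not part of the unlabelled law.
\begin{itemize}
\item Law 1: $y_{\mathrm{corr}}=1$ exactly on types $\{a,c\}$. The margin gate scores accuracy $1$, the top gate scores $\tfrac12$, so $G(\tfrac12)=+\tfrac12$.
\item Law 2: $y_{\mathrm{corr}}=1$ exactly on types $\{a,b\}$. The margin gate scores $\tfrac12$, the top gate scores $1$, so $G(\tfrac12)=-\tfrac12$.
\end{itemize}
Both laws have base accuracy $p=\tfrac12$. Because $P_S$ is identical under the two laws, every functional $T$ agrees on them, while $G(\tfrac12)$ differs in sign. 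This proves the claim.

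\textbf{Possible objection and how to handle it.} The main thing to check is that deterministic labels are admissible; a reader might object that correctness should be a stochastic function of the scores. If so, I would replace the labels by conditional probabilities $P(y=1\mid\text{type})$ strictly inside $(0,1)$, for example $0.9$ and $0.1$ in place of $1$ and $0$. The expected selective accuracies then differ by $\pm 0.4$ with the same sign pattern, and base accuracy stays at $\tfrac12$.

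If one also wants the labelings to respect the semantics of the candidate scores, say $P(y=1\mid s)$ monotone in $s_1$ or in the margin, then Law 2 is monotone in $s_1$ and Law 1 is monotone in the margin. So each labeling is "natural" for one of the two gates.

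Finally, to answer the related-work remark about links chosen without labels: the same construction defeats any monotone-link assumption that cannot be tested from $P_S$.
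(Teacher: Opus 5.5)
Your construction is correct and is essentially the paper's proof. The paper takes $(s_1,s_2)=(U,U-V)$ with $U,V$ independent uniforms and declares the leader correct when $V>\tfrac12$ under one law and when $U>\tfrac12$ under the other; your four equiprobable types are exactly its four quadrants (high/low top score crossed with high/low margin) with the same two labelings, so both give $G(\tfrac12)=\pm\tfrac12$ at base accuracy $\tfrac12$. The only extra work in your discrete version is checking that no tie straddles the cut at coverage $\tfrac12$, which you do, whereas the continuous law avoids ties automatically.
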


\begin{proof}
Let $U,V$ be independent and uniform on the unit interval and put $(s_1,s_2)=(U,\,U-V)$, so $s_1\!\geq\!s_2$, the
margin is $m=V$, and $m$ is independent of $s_1$. Both laws use these scores, so $P_S$ and $T(P_S)$ agree. Let
the gold candidate be $d_1$ exactly when $V>\tfrac12$ under $P_{+}$ and exactly when $U>\tfrac12$ under
$P_{-}$; base accuracy is $\tfrac12$ in both. At $c=\tfrac12$ the margin selects with accuracy $1$ under
$P_{+}$ and the top score with $\tfrac12$, so $G=+\tfrac12$; under $P_{-}$ the two exchange and
$G=-\tfrac12$.
\end{proof}

Rank-only statistics are the special case, since candidate order is a functional of $P_S$, as a monotone map
shows directly: a strictly increasing $f$ fixes every ranking while $f(s_1)-f(s_2)$ reorders cases
\citep{stevens1946scales,wang2019miscalibration}. The quantifier is the unlabelled score distribution and no
wider. A method that reads the inputs, the candidate texts or the scorer itself, or that intervenes on scores
rather than observing them, is outside the statement, which is why \S\ref{sec:scope}'s mechanism
experiment, an intervention, is not forbidden by it. What is ruled out is reading the answer off the scores a
deployed system already returns.

The margin carries a domain condition of its own: it is meaningful when one predictor scores a common
candidate set and the top two are distinct alternatives on a common scale. Repeated samples from a free-form
generator are not that: the two highest-scoring samples are usually the identical string, leaving no
runner-up at all (App.~\ref{app:margin_scope}).

The two results order the questions. Eq.~\eqref{eq:feasibility} asks whether a predictor can reach the required
operating point at all, and is settled before any confidence estimator is compared.
Eq.~\eqref{eq:scoredecomp} then motivates which part each decision should read, a structural hypothesis we
test rather than assume, and Prop.~\ref{prop:unident} withholds the size of the gain.

\section{Data and Experimental Setup}
\label{sec:bench}
Our benchmark is Phenopacket Store v0.1.27 \citep{jacobsen2022phenopacket,danis2024phenopacketstore}, which
collects $10{,}374$ real patient cases, each a set of observed Human Phenotype Ontology terms
\citep{kohler2021hpo} with the patient's confirmed OMIM diagnosis. The $780$ diagnoses that appear are the
evaluation labels; the retriever ranks all $8{,}553$ OMIM entries with an HPOA profile, the fixed jointly
scored set the margin needs. App.~\ref{app:terms} glosses the clinical-genetics terms used below.

To stratify by rarity we map each gold OMIM entry to its Orphanet prevalence class \citep{orphadata}, taking
the rarest when an entry carries several. That gives $4{,}780$ ultra-rare cases (below $1/10^6$, or
$1$--$9/10^6$) and $1{,}167$ less-rare ones (at least $1/10^5$); a third group whose prevalence Orphanet does
not document is analysed separately, only $372$ of the $780$ diseases carrying a documented class. The
collapse survives a point-prevalence rule and a finer four-band split (App.~\ref{app:strat}). Every model sees
the present HPO terms and returns a ranked top-5 differential with a verbalized confidence per entry, as free
text rather than a selection from a list.

We test four classes of system, named individually in Table~\ref{tab:collapse}. The first is the deployable
regime the collapse claim is about: eight small open-weight models a hospital can run on-premises, from the
Qwen \citep{qwen25vl,qwen25}, Llama \citep{grattafiori2024llama3}, Mistral \citep{labrak2024biomistral},
InternLM and Yi families. The second is medical specialisation: five models
\citep{ankit2024openbiollm,chen2024huatuogpto1,sellergren2025medgemma,baichuan2025m2} at the same prompt,
decoding and linker, two of them controlled contrasts, OpenBioLLM-8B against our Llama-3.1-8B row at fixed
base model and OpenBioLLM-70B against its own $8$B sibling at fixed recipe (App.~\ref{app:tabnotes}). The
collapse claim is stated for the eight-model baseline group and not for these. The third is a frontier generality check, DeepSeek-V4-Flash/Pro
\citep{deepseekv4} at $284$B/$1.6$T parameters ($13$B/$49$B active), open-weight but API-served and so
off-regime, with reasoning disabled to isolate scale and enabled to add test-time compute. The fourth comprises two phenotype rankers: our own information-content overlap
retriever, which scores all $8{,}553$ profiled OMIM entries jointly, and Exomiser
\citep{smedley2015exomiser} in phenotype-only mode on the same cases, a tool clinicians run, which ranks
genes rather than diseases (App.~\ref{app:exomiser}). Open models are decoded greedily with vLLM \citep{kwon2023vllm}; compute, versions and seeds are in
App.~\ref{app:repro}.

The rare-disease benchmark always contains the gold diagnosis, so it can ask whether the top-ranked answer
is correct but not whether any correct candidate is present. Two standard ranking tasks supply the second
decision. On SciFact \citep{wadden2020scifact} the released ColBERTv2 checkpoint
\citep{santhanam2022colbertv2} ranks $5{,}183$ abstracts for $300$ claims; on BC5CDR \citep{li2016bc5cdr}
and MedMentions \citep{mohan2019medmentions} SapBERT and BM25 \citep{robertson2009bm25} rank ontology
entries for a mention. Masking a random slice of the corpus or ontology then leaves some inputs with no valid
candidate \citep{zhu2023nil}, so one run scores both decisions off the same scores; rates and preprocessing
are in App.~\ref{app:transfer}.

Generated disease names are linked to OMIM with SapBERT \citep{liu2021sapbert}; the phenotype rankers return
identifiers and use no linker, so the tail comparison does not rest on one. We report Recall@1/@5; selective accuracy as a function of coverage;
AUROC for top-1 correctness and, on the transfer tasks, for candidate presence; and ECE per bin for the
verbalized confidence signals. App.~\ref{app:linkaudit} audits the linker on held-out surface forms.

\section{Results}
\emph{Convention.} Numbers below are per-case (micro); per-disease averaging shrinks the collapse to
$\sim\!2$--$6\times$ and preserves the tail contrast, established once in App.~\ref{app:macro}.

\subsection{Performance across Prevalence Strata}
\label{sec:collapse}
Table~\ref{tab:collapse} shows the collapse across eight small open models spanning five families:
Recall@1 falls from a $\sim\!40\%$ less-rare anchor to $\leq\!5\%$ on the ultra-rare tail, and it is not
Qwen-specific, the two sharpest ratios being Mistral's and InternLM's. Five medical-specialised models split $2$ over the feasibility bar of \S\ref{sec:calib} and $3$ under it,
the best reaching $7.6\%$, and the one controlled pair we have, OpenBioLLM-8B against our Llama-3.1-8B row,
is $-1.0$pp (App.~\ref{app:tabnotes}).

Scale and reasoning move the tail substantially without closing it: DeepSeek-V4-Pro reaches $18.6\%$ and
reasoning $22.4\%$, enough to lift the feasibility ceiling (\S\ref{sec:calib}) though still below the
retriever's $25.6\%$ (\S\ref{sec:reframe}) and not resolvably so. A paired disease-clustered test separates the retriever
from $7$ of the $10$ configurations we can recompute, and the three it does not are exactly the frontier ones,
trailing by $3.1$ to $9.8$pp with intervals containing zero. Against the small models the gap is decisive and in triage
(\S\ref{sec:calib}). The reasoning columns also need a common denominator: on the $849$ cases both configurations returned the
reasoning gain is $+1.9$pp, half of what the printed columns imply (App.~\ref{app:tabnotes}).

\begin{table*}[t]
\centering\footnotesize
\begin{tabular}{@{}llccc@{}}
\toprule
 & & \multicolumn{2}{c}{Recall@1 (\%)} & \\
\cmidrule(lr){3-4}
Model & Family & Less-rare & Ultra-rare & Less/ultra ratio \\
\midrule
\multicolumn{5}{@{}l}{\emph{small open-weight models}} \\
Qwen2.5-VL-7B  & Qwen     & 36.5 & 1.7 & $21\times$ \\
Qwen2.5-14B    & Qwen     & 40.5 & 4.2 & $9.8\times$ \\
Qwen2.5-VL-32B & Qwen     & 40.6 & 4.3 & $9.4\times$ \\
Llama-3.1-8B   & Llama    & 39.2 & 4.6 & $8.6\times$ \\
Mistral-7B-v0.3 & Mistral & 40.1 & 1.6 & $25\times$ \\
BioMistral-7B  & Mistral  & 36.0 & 2.4 & $15\times$ \\
InternLM2.5-7B & InternLM & 32.9 & 1.4 & $24\times$ \\
Yi-1.5-9B      & Yi       & \phantom{0}7.2 & 0.7 & $9.7\times$ \\
\midrule
\multicolumn{5}{@{}l}{\emph{medical-specialised models (on-premises, $8$B--$70$B)}} \\
OpenBioLLM-8B   & Llama   & 36.5 & 3.6 & $10\times$ \\
HuatuoGPT-o1-8B & Llama   & 35.6 & 4.2 & $8\times$ \\
MedGemma-27B    & Gemma   & 37.4 & 3.7 & $10\times$ \\
Baichuan-M2-32B & Baichuan & 46.9 & 7.6 & $6\times$ \\
OpenBioLLM-70B  & Llama    & 40.5 & 6.7 & $6\times$ \\
\midrule
\multicolumn{5}{@{}l}{\emph{frontier MoE models (API)}} \\
DeepSeek-V4-Flash          & DeepSeek & 40.5 & 15.8 & $2.6\times$ \\
DeepSeek-V4-Pro (reason.\ off) & DeepSeek & 48.2 & 18.6 & $2.6\times$ \\
DeepSeek-V4-Pro (reason.\ on)  & DeepSeek & 49.5 & 22.4 & $2.2\times$ \\
\midrule
\multicolumn{5}{@{}l}{\emph{phenotype prioritizers}} \\
IC-overlap retriever (ours) & -- & 31.4 & 25.6 & $1.2\times$ \\
Exomiser (phenotype-only) & -- & 21.2 & 13.3 & $1.6\times$ \\
\bottomrule
\end{tabular}
\caption{Per-case Recall@1 (\%) by disease-prevalence stratum, every system on the same HPO input. Every
generative row and Exomiser are scored on one uniform $2{,}000$-case sample ($939$ ultra-rare); the retriever
needs no generation and is scored on all $10{,}345$ eligible cases, returning the same $25.6\%$ tail value on
the sample (App.~\ref{app:robust}). Two rows have their own denominator: reasoning-on returned $849$ of the
$939$ ultra-rare cases and Qwen2.5-VL-32B $901$, and the dropped cases are the harder ones. Retriever scores
use the leave-source-out control; Exomiser ranks genes and the rest diseases (App.~\ref{app:tabnotes}).}
\label{tab:collapse}
\end{table*}

\subsection{Feasibility Before Calibration}
\label{sec:calib}
Stated confidence does not warn a clinician. It stays badly \emph{over}confident on the tail while its
discrimination runs from near chance to $0.90$ AUROC for Qwen-32B (App.~\ref{app:calib}), which is what
makes it dangerous rather than merely poor.

Good discrimination still does not deliver the specified operating point (Eq.~\eqref{eq:ceiling}), and this
cuts two ways. At $c{=}10\%$ the ceiling is simply $p/c$, which for the eight small models runs from $7\%$ to $46\%$, with
three of the five medical models inside that band and two above it. Any target above $46\%$ is out of reach for those eleven however well their confidence discriminates, and
Qwen-32B, the best of them at ranking its own answers, is capped at $43\%$. A conformal selector cannot repair the shortfall either: choosing among
the same top-1 predictions, it is subject to the same counting bound (App.~\ref{app:audit};
\citealp{hanselle2025conformal}).

At the frontier the ceiling lifts and the second direction weakens: a tail base rate of $15.8$--$22.4\%$
takes the bound to $\approx\!1.0$, which makes the target attainable rather than showing it is attained. It is:
at matched $10\%$ coverage the model's own confidence and the external gate are not separable on the three
frontier configurations ($+2.1$pp to the external one). The test earns its keep by disqualifying predictors rather than endorsing them
(App.~\ref{app:audit}).

\begin{table}[t]
\centering\footnotesize\setlength{\tabcolsep}{1.5pt}
\begin{tabular}{@{}lcccc@{}}
\toprule
ultra-rare tail & Qwen & Qwen & Qwen & Llama \\
                & 7B & 14B & 32B & 8B \\
\midrule
bare LLM, base Recall@1 & $1.7$ & $4.2$ & $4.3$ & $4.6$ \\
\quad gated by own conf.\ (own $c$) & $2$\,($66$) & $18$\,($16$) & $16$\,($16$) & $5$\,($92$) \\
\quad oracle ceiling at $10\%$ & $17$ & $42$ & $43$ & $46$ \\
\midrule
retriever margin at $10\%$ & $81$ & $74$ & $76$ & $80$ \\
\bottomrule
\end{tabular}
\caption{Selective accuracy (\%) on the ultra-rare stratum for four of the $16$ generative configurations in
Table~\ref{tab:collapse}. Parentheses on the second row give the coverage each LLM's own confidence actually
resolves, which is not $10\%$; the other two rows are at $10\%$, and the coverage-matched comparison is in
App.~\ref{app:calib}. The oracle row is $\min(1,p/c)$. The retriever row varies across columns only because
each model returned parseable output on a slightly different case set (App.~\ref{app:tabnotes}).}
\label{tab:method}
\end{table}
\subsection{Phenotype Retrieval on Ultra-Rare Cases}
\label{sec:reframe}
Is the tail simply hard? A classical information-content-weighted phenotype-overlap retriever, given the same
HPO terms, reaches $25.6\%$ on the ultra-rare tail after removing same-source curation leakage: we drop any
gold term supported \emph{only} by the case's own source publication (App.~\ref{app:decon}). The tail is therefore not uniformly hard: a ranker built directly on
phenotype--disease compatibility keeps far more accuracy there than the LLM, and the ordering survives
per-disease averaging. Neither is accurate enough to diagnose autonomously, which is why the deployable contribution is triage
rather than an accuracy chase.

The retriever's absolute accuracy is only \emph{partially identified}: curation is shared between the
records and the knowledge base, and a stricter same-publication control gives a sensitivity range of
$[5.2\%,25.6\%]$, where the ordering holds against the five weaker small models but not the frontier. What is stable across it is the lift over the gate's own base rate, $2.4$--$3.3\times$: a triage lift, not a
certified operating point, and three external cohorts corroborate its upper half (App.~\ref{app:decon},
\ref{app:external}).

\subsection{Top Scores versus Top-Two Margins}
\label{sec:method}

The claim is about how a score is built, not about medicine, so it should hold on a clinical prioritizer
and a text retriever alike; we take the clinical ones here and the text ones in \S\ref{sec:transfer}.

Exomiser \citep{smedley2015exomiser} shows it on a tool clinicians run. Its ultra-rare base Recall@1 is
$13.3\%$ over $n{=}939$ cases, and the gap between its first two candidates, from the same output, selects a
$10\%$ band at $29.0\%$ (disease-clustered $17.7$--$42.7\%$). Read its top score and no band can be drawn: the decile falls inside one tie
block, so its accuracy moves over $[0.0,4.3]\%$ on how ties are broken. Neither that block nor the tool's own
shipped $p$-value accounts for the difference, so this is not a straw target
(App.~\ref{app:exomiser}, Fig.~\ref{fig:rc};
\citealp{cooperstein2025exomiser,vestito2024reinterpretation}). Without variant data nothing breaks a tie once
the phenotype score saturates, so we claim the dissociation for phenotype-only prioritization and no
further.

Our own retriever separates the two further, gating at $45.8\%$ on its raw score against $81.0\%$ on its
margin, and only the delivered system is a deployment claim (Table~\ref{tab:method}): triaged by that
margin it answers the most confident $10\%$ of ultra-rare cases at $74$--$81\%$, where the bare LLM at the
coverage its own confidence resolves reaches $2$--$18\%$. The ordering survives simulated prospective phenotyping, where dropping the
most informative terms costs the retriever's band far less than it costs the LLM's (App.~\ref{app:robust}),
and it is retriever-first rather than hybrid because neither a heuristic combiner nor a validation-trained
one improves on the retriever alone (App.~\ref{app:fusion}).

\subsection{Testing the Shared-Component Mechanism}
\label{sec:scope}
Eq.~\eqref{eq:margincancel} is exact and carries no evidence on its own; the reverse manipulation does.
Subtracting an estimate of $b(x)$ from our retriever's scores lifts its raw gate from $45.8\%$ to $80.3\%$,
the margin's own level, though only with a post-hoc temperature and only for estimates read off the candidates
in contention, input-derived ones reaching at most $70.7\%$ (App.~\ref{app:margin_mechanism}). What no manipulation delivers is a label-free rule for
deciding in advance when the subtraction will help.

LIRICAL \citep{robinson2020lirical} shows why on a deployed tool. Reading one ranking in the two units it
ships moves the margin's advantage from $+0.056$ under the likelihood ratio to $-0.281$ under the post-test
probability: one monotone map, a swing of $0.337$, and nothing in the scores to choose between them. Across seven systems the advantage runs from $+0.310$ to $-0.026$, an ordering and not a uniform gain
(App.~\ref{app:margin_ranklimit}, \ref{app:transfer}).

The controlled version of that swing makes the rest concrete. Fourteen units of \emph{one} ranking, from the fusion literature's normalizations to an injected per-input
level at five strengths, leave Recall@1 identical at $77.6\%$ and the margin's AUROC inside
$0.738$--$0.804$ while the raw score's runs $0.307$--$0.822$, a spread eight times as wide.
Under a purely additive shift the margin's AUROC is unchanged to machine zero, as
Eq.~\eqref{eq:margincancel} requires, and so is the top score against the mean of the field: the injection
arms sweep the between-input share from $0.237$ to $0.988$ and cost the raw score $0.292$ of AUROC while
moving either contrast by at most $3\times10^{-6}$. The sweep therefore confirms the predicted invariance in an implemented scorer and shows the raw score's
collapse to be driven by exactly the component the contrasts remove. How far the shared level tracks the gain across these arms is
measurable, but only under an assumption about that level which Prop.~\ref{prop:unident} says the scores
cannot certify (App.~\ref{app:normsweep}).

\subsection{Different Confidence Signals for Correctness and Candidate Presence}
\label{sec:transfer}
Concept normalization is the closest neighbour and its survey's rules threshold the top-1 score
\citep{sevgili2022neuralel}. On the $12{,}741$ BC5CDR mentions the two are not
distinguishable ($-0.027$, $[-0.075,+0.021]$ clustered on mention strings), consistent with $C\!=\!0.011$.
Swapping the scorer for BM25 raises $C$ to $0.441$ on the same mentions and moves the margin from tied to
ahead ($0.743$ to $0.793$, $[+0.017,+0.082]$), a crossover registered before running: the scorer's
normalization, not the presence of text, decides whether the correction is needed.

Both runs ask only whether the top-1 is right. Deployed linkers must also decide whether to link at all, and
Eq.~\eqref{eq:margincancel} sends the two decisions to opposite parts of one score: correctness to the
\emph{difference}, an out-of-base mention to the \emph{level} it cancels. Masking a quarter of MEDIC makes $25.8\%$ of mentions unlinkable and asks both. Our recommendation is bounded here: under SapBERT the margin is the worse NIL detector at every rate
($-0.054$, $[-0.088,-0.023]$), and on MedMentions it is best in none of six cells. The field's rule is bounded by the same term: reading presence off the raw score works only where that
score is normalized, and under BM25 it must be corrected from outside the candidate set. The query's own IDF mass does it ($+0.109$, $[+0.053,+0.163]$), while the contrast that
estimates the same level from the contenders is the worst of four statistics at every rate
($0.636$--$0.657$ against $0.767$--$0.775$), behind even the raw score at the heaviest masking.

The split is not a property of short-mention linking. On SciFact the same reading holds for correctness,
ColBERTv2 gating at $0.764$ by its own MaxSim against $0.836$ by its margin, with a case-level share of
$C\!=\!0.374$ that is not query length. Under the same masking protocol, at every rate that margin is the
\emph{worst} of three for whether a relevant abstract survives
($0.647$--$0.696$ against $0.696$--$0.738$) while beating its raw score on top-1 correctness ($+0.043$ to
$+0.080$). That is the statistic adaptive-RAG gates threshold to decide
whether retrieval is needed at all \citep{wang2025targ} (App.~\ref{app:transfer}).

\section{Conclusion}
Selective prediction cannot rescue a ranker that rarely places the correct answer first. On ultra-rare
diseases, eight small open-weight LLMs lack enough correct top-ranked predictions to reach $50\%$ selective
accuracy at $10\%$ coverage, regardless of calibration; more accurate medical-specialised and frontier LLMs
pass the same feasibility check. Phenotype rankers provide a stronger starting point, but their scores expose
another distinction. For fixed candidate sets with unnormalized scores, the top-two margin can better
indicate whether the leader is correct by removing variation shared across candidates. That removal can also
discard information needed to tell whether the list contains an answer, as our SciFact and entity-linking
experiments show. Selective prediction over ranked outputs should therefore proceed in order: first test
whether base accuracy makes the target feasible, then choose a confidence signal for the decision being made.
Since unlabelled scores cannot reveal the gain from switching signals, the final choice requires labelled
validation.
\label{endmain}

\section*{Limitations}
Our thresholds are retrospective estimates, not guarantees for new patients. Certifying $80\%$ accuracy at
$10\%$ coverage would require about $26{,}600$ ultra-rare cases. Disease clustering reduces the effective
sample further, and conformal calibration misses its promised error rate on diseases absent from the
calibration data (App.~\ref{app:calib}). Because clinical sites will encounter such diseases, the reported
thresholds are not clinically certified.

The feasibility result depends on the delivery rule. We analyze top-1 endorsement at a fixed accuracy target
and coverage. If success means that the correct diagnosis appears anywhere in the top $k$, Recall@$k$
replaces Recall@1 and the ceiling rises. At $50\%$ accuracy and $10\%$ coverage, all eight small models are
ruled out at $k{=}1$, but only four at $k{=}5$ (App.~\ref{app:topk}). Passing the test means only that the
target is not ruled out, not that an available confidence score reaches it. This changes which models fail
the feasibility check, but not our model comparisons or margin analyses, which hold $k$ fixed.

The retriever's absolute accuracy remains uncertain because patient records and disease profiles share
curation. Leave-source-out removes direct same-publication overlap but not broader curation effects, and our
two controls yield a five-fold range in base accuracy (App.~\ref{app:decon}). The benchmark phenotypes were
also curated after diagnosis and may be richer than prospective inputs; feature removal is only a proxy for
that difference (App.~\ref{app:robust}). The margin's lift over the retriever's own base rate survives these
checks, but its absolute deployment accuracy is not established. Finally, the mechanism is directly tested on
one retriever; six other rankers provide supporting evidence but not the same causal test. New systems still
require labelled validation, which Prop.~\ref{prop:unident} shows the scores alone cannot substitute for.

\section*{Ethics Statement}
A triage gate can harm. On the answered decile one global rule delivers $71.0\%$, disease-clustered interval
$[58.8,79.9]$, so about three in ten answered patients receive a confidently wrong top-1 and four in ten is
inside the planning range. That is not a null event: it can trigger confirmatory testing, cascade testing of
relatives and counselling. Most of those errors still point at the right work-up, $73.0\%$ at the causative gene, and we grade them
rather than count them (App.~\ref{app:errgrade}). Grading is not reassurance: seven of the fifteen
near-misses pair a purely dominant entity with a purely recessive one, so the assay is right while the
recurrence risk, the cascade-testing targets and the reproductive counselling are wrong, and both gene-level
and disorder-level grading score that as a near miss. A third of the answered decile is of unknown
prevalence, a group a clinic cannot identify before diagnosis and on which accuracy is lower still. We
intend the gate as a ranking aid for a specialist, never as an autonomous decision, which would also engage
FDA SaMD and EU MDR. The measurements above are ordinary selective prediction, where a deferred case
is scored as unanswered. The deployment we propose is weaker than that metric: a deferred patient still
receives the differential the system would have produced anyway, without the confidence flag, so the gate
annotates rather than gate-keeps. It does not follow that deferral costs nothing. Withholding a model's output from
clinicians raises missed diagnoses relative to showing no model at all, because an absent prediction is read
as evidence of absence rather than as neutrality \citep{jabbour2025limits}. An unflagged case is therefore not
a null intervention, and how the absence of a flag is displayed is a design question prior to where the
threshold sits.

The gate is also not equally available, which is a documented hazard of selective classification rather than
a quirk of ours \citep{jones2021selective}. Of the $217$ sampled patients with one or two recorded findings,
none is ever flagged, against a flag rate of $24.6\%$ above fifteen terms, and yet the retriever is barely worse on them than on
patients with eight to eleven terms, $19.4\%$ against $24.0\%$. What collapses is not whether the system helps those patients but whether they
can qualify for the marker, because the margin grows with how many terms were recorded. That closes the gate on
patients at first presentation, those seen by generalists, and those in systems where deep phenotype coding is
not routine. App.~\ref{app:margin} measures what removing the dependence would cost, $70.7\%$ against $81.0\%$, and
App.~\ref{app:whichcases} characterises the patients the gate does select: an efficiency and equity
trade-off we have measured but not resolved.

All patient data is secondary use of de-identified, already-published records, and we use only structured HPO
terms and gold labels. We did not recruit or interact with patients, and obtaining consent was therefore not
ours to do: whatever consent the original case reports rest on was obtained by their authors, outside this
study, and the resources that aggregate them do not consistently record it. We note that rather than treat
public availability as consent. Re-identification risk is nonzero for any rare-disease case report, since an unusual combination of findings
can be distinctive, and prior publication does not remove it. Releasing source identifiers makes those reports
easier to locate, which is a risk we add rather than one we inherit; we judge it warranted for
reproducibility, release no case-report text or further patient attributes, and ask that the artifacts be
used under the original sources' terms. App.~\ref{app:decon} quantifies the concentration of curation in
one contributor account, $98.7\%$ of the records, which is a statement about a corpus-construction process and a
limitation of our own estimates, not a criticism of work this paper wholly depends on.

\bibliography{references}

\appendix

\section{Terminology}

\label{app:terms}
We use several terms from clinical genetics and rare-disease diagnosis. The definitions below state how each
term is used in this paper; they are not intended as complete clinical definitions.

\begin{itemize}[leftmargin=1.2em,itemsep=2pt,topsep=2pt,parsep=0pt]
\item \textbf{Phenotype.} An observable clinical feature of a patient, such as seizures or short stature, as
opposed to the underlying genetic cause.
\item \textbf{Human Phenotype Ontology (HPO).} A controlled vocabulary of approximately $18{,}000$ phenotypic
abnormalities arranged in an \emph{is-a} hierarchy. For example, ``absence seizure'' is a descendant of
``seizure''.
\item \textbf{Phenopacket.} A machine-readable patient record containing the HPO terms observed in that
patient and the confirmed diagnosis. We use the structured HPO terms as input; the original free-text case
report is not provided to the models.
\item \textbf{Differential diagnosis.} A ranked list of diseases that could explain a patient's findings. Our
systems return a top-5 differential, and Recall@1 and Recall@5 measure whether the confirmed diagnosis appears
first or anywhere in that list.
\item \textbf{OMIM.} Online Mendelian Inheritance in Man, a reference catalogue of Mendelian diseases. OMIM
identifiers define the disease label space in our experiments: the retriever ranks $8{,}553$ candidate entries,
and the evaluation set contains $780$ gold diseases.
\item \textbf{HPOA.} The annotation resource linking each OMIM disease to the HPO terms reported for it,
together with the publications supporting those annotations. HPOA is the entire knowledge base of our
phenotype retriever.
\item \textbf{Orphanet prevalence.} An epidemiological classification of rare diseases, defined independently
of any language-model training corpus. We use it to form the ultra-rare, less-rare and
undocumented-prevalence strata.
\item \textbf{Ultra-rare.} The union of the two rarest Orphanet prevalence classes: $<\!1$ case per million
and $1$--$9$ cases per million.
\item \textbf{Prioritizer.} A non-LLM system that ranks candidate diseases or genes by their compatibility
with the patient's findings. Exomiser is the clinically deployed example used in this paper.
\item \textbf{Phenotype-only prioritization.} Disease or gene prioritization from HPO terms without
genetic-variant evidence. Our Exomiser experiment uses this setting and should not be read as an evaluation of
its full genome-aware pipeline.
\item \textbf{Information content (IC).} The negative logarithm of a phenotype term's frequency across
diseases. Rare and specific terms receive more weight than common terms when the retriever compares a patient
with a disease profile.
\item \textbf{True-path propagation.} Adding the ancestors of each HPO term to a disease profile, so that a
specific patient finding can match a disease annotated only with a broader ancestral term.
\item \textbf{Curation leakage.} Overlap created when a patient's phenotypes and the corresponding disease
profile were curated from the same publication or by a shared curation process. Such overlap can inflate
phenotype-retrieval performance even though no case text is given to the retriever.
\item \textbf{Selective prediction.} A system answers only a subset of cases and defers the rest;
\emph{coverage} is the fraction answered and \emph{selective accuracy} is accuracy on that subset
(\S\ref{sec:framework}).
\end{itemize}

\section{Margin Derivations and Mechanism Tests}

\label{app:margin}
The main text gives the operational argument for using the top-two margin. This appendix separates three
claims that require different kinds of support. First, the margin removes a candidate-independent case-level
score shift; this is algebra. Second, the gain from doing so cannot be identified from candidate ranks alone;
this is a negative result with one decisive counterexample and one empirical refutation. Third, invariance to
the shift does not by itself guarantee that a confidence signal will work; this needs measurement. We then
test the proposed mechanism on our retriever and derive a scale-specific prediction for Exomiser.

\subsection{Score Representation and Invariance}
\label{app:margin_invariance}
Let a prioritizer assign scores $s(d\mid x)$ to candidates $d\in D$ for case $x$. Treating these as
unnormalized log-potentials induces
\begin{equation}
P_T(d\mid x)=\frac{\exp(s(d\mid x)/T)}{Z_T(x)},
\label{eq:app_gibbs}
\end{equation}
with normalizer $Z_T(x)=\sum_{d'\in D}\exp(s(d'\mid x)/T)$,
for a temperature $T>0$. With $d_1,d_2$ the highest- and second-highest-scoring candidates and scores
$s_1,s_2$, their margin satisfies
\begin{equation}
s_1-s_2=T\log\frac{P_T(d_1\mid x)}{P_T(d_2\mid x)},
\label{eq:app_margin_odds}
\end{equation}
or equivalently $P_T(d_1\mid x,\{d_1,d_2\})=\sigma\!\left((s_1-s_2)/T\right)$ after restricting to those two
candidates. The margin is thus proportional to the log-odds of the leader against its closest competitor, and
the normalizer $Z_T(x)$ cancels.

Now consider
\begin{equation}
s(d\mid x)\longmapsto \alpha s(d\mid x)+\beta(x),\qquad \alpha>0,
\label{eq:app_gauge}
\end{equation}
with $\alpha$ a global rescaling and $\beta(x)$ an arbitrary case-level offset shared by all candidates. Then
$s_1\mapsto\alpha s_1+\beta(x)$ while $s_1-s_2\mapsto\alpha(s_1-s_2)$. The top score moves with $\beta(x)$ and
may reorder cases; the margin removes $\beta(x)$, and the remaining positive factor changes neither its
ordering of cases nor its risk--coverage curve.

This is the conditioning argument behind conditional logit \citep{mcfadden1974conditional} and partial
likelihood \citep{cox1972regression}, and related invariance arguments motivate margin-based selective
classification \citep{liang2024selective} and likelihood-ratio rejection rules \citep{hengsoh2025rlog}. Our
contribution is not the algebra but its use in identifying the failure of top-score gating in the ultra-rare
regime.

\subsection{A Deployed Instance, and the Limits of Our Evidence}
\label{app:margin_ranklimit}
Proposition~\ref{prop:unident} bounds what the unlabelled score distribution can determine, and rank
statistics are its special case. LIRICAL supplies a deployed witness rather than a constructed one. It reports
both a composite likelihood ratio and a post-test probability for one identical ranking of one identical case
set, and the pretest probability is a single value for every disease and case, so the second is a fixed
monotone map of the first. The margin's advantage nonetheless moves from $+0.056$ to $-0.281$, a swing of
$0.337$. The map is destructive because it saturates: composite likelihood ratios run past $10^{29}$, so the
posterior is pinned at the top of its range on $79.8\%$ of cases and its top-two difference is numerically zero
on $28.0\%$, while every rank is preserved. Dropping the $7$ cases where the map stops being strictly
increasing in double precision leaves the swing at $0.343$, so saturation is the whole of the effect. That
single swing is as large as the entire spread of the gain across our seven systems, which is also $0.337$; the
two are distinct quantities that happen to agree.

Those seven systems are less independent than the count suggests, which is why $C$ is reported as a measured
property and no ordering is drawn from them: LIRICAL and Exomiser run on the same $939$ cases, the two entity
linkers on the same mentions, and MaxSim SUM and MaxSim MEAN are one ranking in two units, the second being
the first divided by query length and agreeing on every top-$1$, itself a small instance of the same point,
since that one ranking has gains of $+0.178$ and $+0.077$.

The gap this addresses is real even so. Post-hoc normalization of a broken confidence estimator is known to
repair selective classification on vision classifiers, but the repair is fitted on labelled data and offers no
way to tell in advance which model needs it \citep{cattelan2024broken}; likelihood-ratio analysis gives the
condition under which a top-two statistic is optimal, that the distribution concentrate on the leading pair,
but not a test of it \citep{hengsoh2025rlog}. Calibrating a label-free test of that condition would need
scorers spanning the saturation range, which our seven do not.

\subsection{One Ranking, Fourteen Units}
\label{app:normsweep}
Every transform below is strictly increasing within a mention, so the ranking, the top-1 and Recall@1 are
identical in all fourteen arms; only the score domain moves. This is the controlled counterpart of LIRICAL's
two vendor units, inside one system, and it replaces the seven-system correlation the paper draws no ordering
from.

\begin{table}[t]
\centering\footnotesize\setlength{\tabcolsep}{4pt}
\begin{tabular}{@{}lcccc@{}}
\toprule
score domain & $C$ & raw & margin & gain \\
\midrule
cosine         & $0.237$  & $0.822$ & $0.796$ & $-0.026$ \\
center         & $-0.111$ & $0.790$ & $0.796$ & $+0.006$ \\
scale          & $0.974$  & $0.307$ & $0.738$ & $+0.431$ \\
\midrule
softmax $T{=}1.0$   & $-0.111$ & $0.791$ & $0.798$ & $+0.007$ \\
softmax $T{=}0.1$   & $-0.111$ & $0.802$ & $0.804$ & $+0.002$ \\
softmax $T{=}0.02$  & $-0.111$ & $0.800$ & $0.798$ & $-0.002$ \\
minmax         & $-0.047$ & $0.500$ & $0.738$ & $+0.238$ \\
zmuv           & $-0.111$ & $0.739$ & $0.738$ & $-0.001$ \\
sum            & $-0.111$ & $0.704$ & $0.739$ & $+0.036$ \\
\midrule
inject $0.5$   & $0.388$  & $0.773$ & $0.796$ & $+0.023$ \\
inject $1$     & $0.617$  & $0.716$ & $0.796$ & $+0.080$ \\
inject $2$     & $0.847$  & $0.635$ & $0.796$ & $+0.161$ \\
inject $4$     & $0.955$  & $0.569$ & $0.796$ & $+0.227$ \\
inject $8$     & $0.988$  & $0.531$ & $0.796$ & $+0.265$ \\
\bottomrule
\end{tabular}
\caption{Fourteen units of one ranking on $12{,}741$ BC5CDR mentions (AUROC); Recall@1 is $77.6\%$ in every
row. \emph{center} shifts each mention's scores to zero mean, \emph{scale} divides by their standard
deviation, \emph{inject} adds a per-mention offset of the given size drawn independently of correctness. $C$
here is the one-way ANOVA over the ten contenders, not the top-two form \S\ref{sec:transfer} quotes; on the
untransformed cosine the two read $0.237$ and $0.011$ on the same mentions.}
\label{tab:normsweep}
\end{table}

Table~\ref{tab:normsweep} separates the two halves of the decomposition. Under \emph{center}, a purely
additive per-mention shift, the margin's AUROC is $0.796100$ against the untransformed $0.796100$, equal in
double precision as Eq.~\eqref{eq:margincancel} requires, while the raw score moves from $0.822$ to
$0.790$. Under \emph{scale}, a purely multiplicative one, the margin does move, which the algebra does not
forbid. The injection arms then sweep the between-input share from $0.388$ to $0.988$ with the margin's AUROC
fixed at $0.796$ throughout: the gain grows from $+0.023$ to $+0.265$ entirely because the raw score decays
from $0.773$ to $0.531$, not because the margin improves.

One observation is worth stating on its own. The normalizations that fix a per-row statistic outright, softmax at
every temperature, $z$-score and sum, drive the between-input variance to zero by construction, which is why
$C$ sits at the estimator's $K{=}10$ floor of $-1/9$ in those five arms and in the centering arm. Min-max
fixes only the two extremes, so its row means still vary a little and it reads $-0.047$. Those transforms can sweep the shared level down and no further, so a family built only from
them cannot test whether the share tracks the gain; the injection arms exist to span it upward. It also means
that a system whose scores have already been normalized this way has no shared level left for the margin to
cancel, which is the same reading the normalized cosine gets in App.~\ref{app:transfer}.

\subsection{Mechanism Test on the Retriever}
\label{app:margin_mechanism}
Offset invariance is necessary for the proposed explanation but not sufficient to select a useful signal. The
mean-centered top score $s_1-|D|^{-1}\sum_{d\in D}s(d\mid x)$ has the same invariance to
candidate-independent shifts as the margin, yet reaches only $47.9\%$ accuracy at $10\%$ coverage against the
margin's $81.0\%$. The invariance argument identifies an admissible class of signals; it does not show that
every member of that class works.

We test the cancellation on our retriever because it is the only system whose full candidate-score vector we
can recompute. The experiment uses the complete ultra-rare stratum ($n{=}4{,}780$, base Recall@1 $25.4\%$) and
gates at $10\%$ realized coverage.

\begin{table}[t]
\centering\small
\resizebox{\columnwidth}{!}{\begin{tabular}{@{}lc@{}}
\toprule
confidence signal & accuracy at $10\%$ coverage \\
\midrule
raw top score $s_1$                          & $45.8\%$ \\
mean-centered top score                      & $47.9\%$ \\
normalized log-score $s_1/T-\log Z_T$, best $T$ & $80.3\%$ \\
\textbf{top-two margin $s_1-s_2$}            & $\mathbf{81.0\%}$ \\
softmax entropy, full score vector           & $81.4\%$ \\
\bottomrule
\end{tabular}}
\caption{Testing the cancellation mechanism on the ultra-rare retriever output; for entropy, lower is treated
as more confident. The temperature of the normalized log-score is selected after observing performance,
whereas the margin has no fitted parameter.}
\label{tab:margin_mechanism}
\end{table}

The normalized log-score removes the per-case normalizer and recovers almost all of the margin's advantage,
which supports the mechanism without being an equally deployable replacement: it ranges from $34.9\%$ to
$80.3\%$ as $T$ varies over $[0.25,50]$, best at $T{=}3$, and that temperature is chosen after the fact. The
family also explains why the margin works. As $T\!\to\!0$ the normalized log-score approaches a ranking
determined by the top-two margin; as $T\!\to\!\infty$ it approaches the mean-centered top score. The
low-temperature region is numerically unstable, since at $T{=}0.5$ some $60.5\%$ of cases collide because
the relevant exponential differences underflow, so the limit should not be read as an empirical temperature
result.

\subsection{Locating the Shared Score Component}
\label{app:margin_component}
Is the removable component determined by the case input or by the candidate-score distribution? Input-derived
quantities (the number of recorded HPO terms, the number matched, and summaries of their information content)
explain $R^2\!=\!0.92$ of the raw top score's variance, and the raw score correlates with matched-term count
at $\rho\!=\!0.93$, so the raw score strongly reflects case composition. Removing that dependence does not recover the margin's gate: the best input-derived normalizer
reaches $70.7\%$ and residualizing on the in-sample optimal linear combination of all input features reaches
$66.5\%$, against $81.0\%$ for the margin.

Score-derived normalizers behave differently. Subtracting the score at increasing rank weakens the gate
gradually: rank-10 gives $74.1\%$, rank-100 gives $62.3\%$, and subtracting the median gives $47.1\%$,
close to the uncorrected top score. The relevant shared level is therefore set by the few candidates genuinely
in contention, not by the bulk of all $8{,}553$ candidates nor by the patient description alone, which is why
a head-dominated quantity such as $\log Z_T(x)$ succeeds where the mean does not. This refines
Eq.~\eqref{eq:scoredecomp}: the useful approximation to the shared component is case-specific but is expressed
through the head of the competitor distribution, and the margin estimates that level locally using the nearest
competitor.

\subsection{Scope of the Argument}
\label{app:margin_scope}
The log-potential reading is an assumption, not a property guaranteed for every scoring system. The weaker
requirement the cancellation argument actually needs is that candidates for one case share an approximately
candidate-independent additive component; a difference removes such a component even when the scores are not
calibrated probabilities. The margin must also be taken on a scale that is additive over evidence: it is
stable under a global positive affine map but not under an arbitrary monotone one, as LIRICAL shows, so
choosing the score domain is part of specifying the signal. Finally, the argument needs a fixed candidate set
scored jointly by one predictor; independent samples from a free-form generator are not the leading candidates
of such a set and need not have a meaningful runner-up (App.~\ref{app:transfer}).

The mechanism experiment is conducted on one retriever whose complete score vector is available; Exomiser
tests one prediction about score scale, and the remaining systems provide transfer evidence rather than
further interventions. We therefore treat cancellation as the supported mechanism in our main setting, not as
a theorem that the margin must beat the top score for every ranker. Determinism and floating-point sensitivity are recorded in App.~\ref{app:repro}.

\section{Prevalence Strata and Averaging}

Two questions about how the collapse is measured: whether the prevalence labels and their binning create it, and whether per-case averaging inflates it.

\subsection{Prevalence Stratification}
\label{app:strat}
\textbf{Bin correction.} Orphanet lists some diseases with prevalence class ``unknown''/``not yet documented.''
Pooling these (truthy strings) into the less-rare bin would make $57\%$ of that anchor
undocumented-prevalence; only $372/780$ gold diseases carry a real class. We route undocumented cases to a
separate \emph{unknown} bin. Because the LLM performs near the tail on undocumented diseases, that routing
raises the less-rare anchor ($\sim\!19\%\!\to\!\sim\!37$--$41\%$) and \emph{widens} the reported collapse.

\textbf{Finer dose-response.} Fig.~\ref{fig:dose} plots Recall@1 across the ordinal Orphanet classes. LLM
accuracy \emph{peaks} at the moderately-rare $1$--$9/10^5$ class and then declines monotonically across the two
ultra-rare classes to a near-zero floor (e.g.\ Qwen2.5-VL-32B $45.8\%$ at $1$--$9/10^5 \to 18.2\%$ at
$1$--$9/10^6 \to 0.4\%$ at ${<}1/10^6$; that model's own $n{=}166/198/703$, the class sizes $924/993/3787$
being the denominators of the retriever curve below), confirming a genuine gradient/cliff rather than a
two-bin artifact; the decontaminated retriever is nearly flat over the same classes
($34.0\!\to\!26.7\!\to\!25.2\%$). The dip at the \emph{more-common} but sparsely-populated $1$--$5/10^4$ class
($n=243$; more common than the $1$--$9/10^5$ peak) is within noise; the non-monotonicity is at the common end and
does not affect the ultra-rare collapse. A point-prevalence (rather than rarest-class) rule
leaves the collapse magnitude essentially unchanged.
\begin{figure}[h]
\centering\includegraphics[width=\columnwidth]{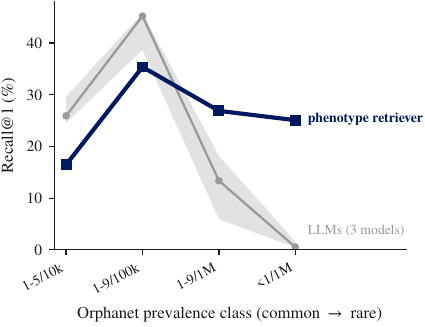}
\caption{Recall@1 across ordinal Orphanet prevalence classes (common$\to$rare). LLM accuracy peaks at the
moderately-rare class then collapses across the ultra-rare classes; the decontaminated phenotype retriever
(black) stays nearly flat.}
\label{fig:dose}
\end{figure}

\subsection{Per-Case and Per-Disease Averaging}
\label{app:macro}
The prevalence bins are dominated by a few high-frequency, textbook-famous diseases. The less-rare bin holds
$1{,}167$ cases but only $81$ diseases; Neurofibromatosis-1 (OMIM:162200) alone is $405$ cases ($34.7\%$ of the
bin, retriever Recall@1 $43\%$), and the top-5 diseases are $59.6\%$. The ultra-rare bin is far less concentrated
($291$ diseases; largest $9.7\%$). Per-case (micro) averaging therefore weights the famous less-rare
diseases heavily and inflates both the anchor and the collapse ratio. Per-disease (macro) averaging removes this
weighting (Table~\ref{tab:macro}): the less-rare LLM anchor falls from $\sim\!40\%$ (micro) to $\sim\!10\%$
(macro), the LLM collapse ratio falls from $8.6$--$25.1\times$ (micro) to $1.9$--$6.1\times$ (macro), and the
retriever's per-case decline ($31.4\%\!\to\!25.6\%$, itself not significant) \emph{reverses} to a per-disease \emph{rise}
($12.7\%\!\to\!23.3\%$).

Two things follow, and they should be kept apart. First, the \emph{magnitude} of the collapse is a micro
effect: per-disease the LLM fall is only $\sim\!2$--$6\times$. Second, the tail \emph{contrast} is not. On the
ultra-rare stratum, which is not concentrated, the retriever holds $\sim\!24\%$ under both averagings while the
small models reach $0.7$--$4.6\%$ per case and $1.3$--$4.9\%$ per disease. The deployment recommendation rests
on that contrast and not on the height of the less-rare anchor.

\begin{table*}[t]
\centering\small
\begin{tabular}{@{}lcccc@{}}
\toprule
 & \multicolumn{2}{c}{less-rare} & \multicolumn{2}{c}{ultra-rare} \\
\cmidrule(lr){2-3}\cmidrule(lr){4-5}
 & per-case & per-disease & per-case & per-disease \\
\midrule
LLM (8 small models)$^{*}$ & $7$--$41\%$ & $\sim\!10\%$ & $0.7$--$4.6\%$ & $1.3$--$4.9\%$ \\
phenotype retriever       & $31.4\%$     & $12.7\%$     & $25.6\%$       & $23.3\%$ \\
\bottomrule
\end{tabular}
\caption{Recall@1 under per-case (micro) and per-disease (macro) averaging. $^{*}$ranges span all eight
small models; Yi, weak in both strata at $7\%$ even on less-rare, is the low outlier. Retriever micro figures
use the full decontaminated set; its macro figures and every LLM figure use the $N{=}2000$ sample.}
\label{tab:macro}
\end{table*}

\textbf{Prevalence against corpus exposure.} Prevalence here is collinear with pretraining
frequency and fame, since the famous less-rare diseases are also the high-accuracy ones, so we do not claim to
separate the two \citep{kandpal2023longtail,sun2024headtotail}. That binning a benchmark by an external
popularity variable exposes a hidden tail is itself established \citep{sun2024headtotail}; what is specific
here is \S\ref{sec:reframe}, where the discriminating evidence is present in the input, so this is a failure to
\emph{use} evidence rather than an exposure gap. \citet{chen2026rarearena} run their own Orphanet-mapped
subgroup analysis and report no decline with prevalence, attributing their gradient to PubMed mention counts
instead. We take that seriously: their cases are PMC case reports drawn from the same literature the models
are pretrained on, so case availability there is decoupled from epidemiological rarity by construction and
their prevalence axis is not ours. Neither study identifies prevalence net of frequency. We cannot measure
pretraining exposure, so we hold a proxy fixed instead: the number of cases of that disease in this corpus, which is what curator attention
and the published literature jointly produced. Two estimators, since neither alone convinces. Binning diseases
by exact corpus frequency and comparing strata \emph{within} each bin assumes no model of the frequency effect
and leaves the per-disease gap at $+0.151$ against a raw $+0.159$, i.e.\ $95\%$ retained. Matching each
less-rare disease to the ultra-rare disease nearest in log frequency, without replacement and within a caliper,
gives $+0.169$, i.e.\ $107\%$. The residual is positive for all $7$ models under both. The matched design is
small ($13$ pairs, minimum detectable difference $0.118$). These controls therefore bound the exposure
explanation rather than identifying a prevalence effect independent of frequency.

\section{Curation Leakage and External Validation}

The retriever's absolute accuracy is partially identified. This section defines the two controls that bracket it, measures what neither can see, and prices the residual from outside the corpus.

\subsection{Leave-Source-Out Decontamination}
\label{app:decon}
Phenopacket case identifiers carry their source publication (PMID), and each HPO annotation in HPOA carries the
PMID(s) supporting it. The fraction of cases whose own source PMID also appears as a source of the gold disease's
HPOA profile is $74.5\%$ (less-rare), $60.6\%$ (ultra-rare), and $89.9\%$ (unknown-prevalence): for most cases the
retriever matches a patient against a profile transcribed in part from the same publication. To decontaminate,
for each case we drop from the gold profile any HPO term whose \emph{only} supporting PMID is that case's own
source (a conservative leave-source-out keeping independently corroborated terms), re-propagate over the
ontology, and re-score with the identical tie-aware expected Recall@1; all other diseases are untouched.
Recall@1 falls: less-rare $41.9\%\!\to\!31.4\%$, ultra-rare $49.4\%\!\to\!25.6\%$, unknown $67.5\%\!\to\!18.0\%$
($N{=}10{,}345$; the $29$ cases whose gold disease carries no HPOA profile are absent from the candidate set and
are excluded from every retriever score, decontaminated or not). A tie-tolerant double-precision reimplementation
gives $28.1/24.4/16.6\%$; we quote that range where the exact value matters, as float-exact tie detection moves
the less-rare figure between $26.5$ and $32.5\%$.

\textbf{Residual curator-level coupling.} $98.7\%$ of the phenopackets were created by
a single curator account, and that same curator contributed ${\sim}21{,}800$ HPOA
disease-phenotype rows, so patient record and knowledge base are not independently authored, and a PMID-based
control cannot see this. Of the ultra-rare patient terms matching the gold profile, $51.3\%$ were biocurated by
the person who wrote the patient record and $49.6\%$ by that person \emph{from the same publication}; our control
removes $23.0\%$ and leaves $26.6\%$. The terms that survive do so by ``independent corroboration'', but for
$91.2\%$ of them the corroborating publication itself contributes cases of the same disease to this corpus, and
for $29.9\%$ of ultra-rare cases the gold's HPOA frequency denominator exactly equals that disease's case count
here ($2.1\times$ a case-weighted permutation null; a joint disease-level permutation, the correct unit, gives
$3.1\times$, $p<0.001$), i.e.\ the annotation was tabulated from these patients. A strict control
that drops \emph{any} gold term the case's own publication supports gives ultra-rare $5.2\%$
(less-rare $6.1\%$).

\textbf{Partial identification of retriever accuracy.} The two
controls bound the estimand from opposite sides under a monotone-bias argument: the strict control discards terms
that other publications genuinely establish, so it is biased \emph{down}; leave-source-out retains terms written
by the record's own curator from other papers, so it is biased \emph{up}. Neither is the estimand, and HPOA
records PMIDs but not cohort identity, so nothing in the metadata closes the gap. We deliberately do \emph{not} invoke the contaminated-data bounds of
\citet{horowitz1995contaminated}: those require the contamination \emph{rate} to be known and leave the
contaminating distribution free, which is the reverse of our situation, we know the channel and not the rate, and
with the rate unrestricted their sharp bounds are vacuous. What we run is a sensitivity analysis indexed by that
unknown rate, in the sense of \citet{rosenbaum2002observational}, and we report the breakdown point at which each
conclusion fails \citep{mastenpoirier2020breakdown}. We accordingly
report $[5.2\%,\,24.4\%]$ as a \emph{sensitivity range}, not a confidence interval. The same range at the
published implementation's precision is $[5.2\%,25.6\%]$, the $1.2$pp being the tie-handling gap above, and
within-control sampling
error is separate, and disease-clustered at the upper end gives $24.4\%$ $[17.1,33.0]$. The strict end is not
knowledge-base deletion: gold profiles retain $67.2\%$ of their annotations and only $6.8\%$ of ultra-rare
profiles empty.

\textbf{Stability across the sensitivity range.} Sweeping the whole interval, the gate's \emph{lift} over its own base stays in $[2.4,3.3]\times$, it is
non-monotone, peaking at $3.3\times$ in the interior before falling to $2.4\times$ at the strict end, so the
interval must be swept and not merely evaluated at its endpoints, and its AUROC falls monotonically from $0.80$
to $0.65$. The margin continues to outrank the best predictor-free gate at every point, though its advantage
narrows from $1.9\times$ to $1.3\times$; we state that as a dominance, not as a level, because the
predictor-free gate's own level moves across the sweep too and we did not register it endpoint by endpoint. What is \emph{not} invariant is the absolute level, 
and with it the comparison against the frontier model, whose breakdown point is only ${\approx}6\%$ residual
contamination, so we rest nothing on it. The small-model comparison is not invariant either, and we do
not claim it is: at the strict endpoint the retriever's $5.2\%$ clears $5$ of the $8$ small models, but the
strongest three ($4.2$--$4.6\%$) are not separable from it at this sample size. We therefore rest the gate claim on its
lift over its own base rather than on the level, and state the accuracy comparison as holding under
leave-source-out and against the five weaker small models under every control. \emph{Two asymmetries matter when reading the bracket.} The LLMs are not decontaminated at all, the same case
reports are in their pretraining data, so a maximally-stripped retriever against an unstripped LLM compares a
lower bound with an upper bound rather than measuring which predictor is better. And at the strict end the
retriever falls below the frontier model, so ``no LLM surpasses it on the tail'' holds against the frontier only
under leave-source-out, and at the strict end is a statement about the five weaker small models. Note that the vocabulary ablations
below rebut the \emph{near-unique-key} mechanism and are computed off the leave-source-out retriever; they do
\emph{not} address shared authorship, which no vocabulary restriction can remove.

\textbf{Selection-based check and its confounds.} One can avoid the over-correction objection
entirely by \emph{selecting} rather than deleting: keep only tail cases for which no matched gold term is
supported by the patient's own publication. That leaves $2{,}441/4{,}780$ cases, on which the retriever falls
$25.4\%\!\to\!9.7\%$ while the frontier model barely moves ($22.4\!\to\!19.3\%$), i.e.\ on that subpopulation the
frontier model is ahead. Two confounds inflate this. Those cases have far thinner gold profiles ($40.2$ vs.\
$84.0$ HPOA terms), and the flag itself is mechanically tied to how much matched: $P(\text{flagged})$ rises from
$0\%$ at $1$--$3$ matched terms to $59\%$ at ${\geq}26$, while matching more terms independently predicts being
correct. Stratifying on matched-term count does not explain the gap away: it shrinks to $+11.1$pp among
cases matching between $10$ and $25$ terms ($n{=}1{,}223$), but in the ${\geq}26$ stratum, which holds $63\%$ of the
tail, it is $+34.8$pp against the $+32.3$pp unstratified gap. We therefore read this as corroborating the bracket, 
own-source support does carry real signal beyond case difficulty, rather than as a clean estimate of either
endpoint.

\textbf{A near-unique-key artifact does not operate here.} A canonical field that identifies an entity
outright, and does so more often for rarer entities, can manufacture a tail gradient on its own. Term
specificity does rise with rarity ($17.7\!\to\!24.9\%$ of diseases own a
globally-private term), but deleting \emph{every} such term from both patient and knowledge base leaves the tail
unchanged ($24.4\!\to\!24.4\%$, gate $79.7\!\to\!78.5\%$): private keys carry only $13.9\%$ of the tail's Recall@1
mass. Removing the patient's single highest-information term costs $4.4$pp, and restricting the vocabulary to
phenotypes shared by more than $100$ diseases still leaves the tail at $16.8\%$, above every small model. An HPO
profile is $10$--$40$ graded-specificity terms, not one canonical field, which is why that failure mode has
no analogue here. The gate is the most robust component: margin AUROC stays at $0.78$--$0.87$ across all of these
ablations and at $0.65$ under the strict control. Its \emph{operating point}, however, does not survive that
control: with a $5.2\%$ base the confident decile delivers $12.3\%$ (lift $2.4\times$). Note this configuration
\emph{passes} $p\!\ge\!\tau c$ ($5.2\%$ against a $5\%$ floor) while delivering nowhere near $\tau$, a clean
illustration that the criterion is necessary and not sufficient (\S\ref{sec:calib}), and that at the strict end
the gate would be a directional signal rather than a safe operating point. App.~\ref{app:external} is why we do
not read the strict end as the operating truth.

The apparent ``flat/higher on the tail'' shape (\S\ref{sec:reframe}) is thus partly curation leakage. We do
\emph{not} claim a direction for the decontaminated per-case gradient: the less-rare bin holds $1{,}167$ cases but
only $81$ diseases (neurofibromatosis type 1 alone is $34.7\%$ of it, the top five $59.6\%$), and a
disease-clustered bootstrap puts the gradient at $-3.7$pp with $95\%$ CI $[-14.5,+14.6]$, which per-disease
macro-averaging flips to $+12.8$pp (App.~\ref{app:macro}). Absolute retriever \% is an upper bound for two
\emph{distinct} reasons: same-source leakage, which we measure and remove here, and post-hoc HPO \emph{completeness},
a distribution shift decontamination does not touch and which we quantify separately (App.~\ref{app:robust}), not
because the retriever beats real tools (\S\ref{sec:bench}); all triage signals (\S\ref{sec:method}) are computed
on this decontaminated retriever.

\subsection{Evaluation on External Rare-Disease Cohorts}
\label{app:external}
App.~\ref{app:decon} leaves the tail retriever partially identified because both of our controls are proxies for
the same unobserved quantity: how much of the retriever's accuracy comes from one curator having written both the
patient records and much of the knowledge base. A PMID-based control cannot see curator-level coupling, and the
strict control over-corrects by construction, since a gold term the case's publication happens to mention is not
thereby a leaked term. No amount of further decontamination of \emph{this} corpus can separate the two.

We therefore price the leak from outside. RareBench \citep{chen2024rarebench} aggregates five rare-disease cohorts
none of which is Phenopacket Store; three are curated by groups with no relation to it: RAMEDIS
(Bielefeld University and the Reutlingen children's hospital), HMS (Germany), and MME
(CHEO/SickKids/Toronto). We exclude RareBench's fourth cohort, LIRICAL, on purpose: its cases derive from
published case reports inside the same Monarch/HPO ecosystem, so it is not a clean independence test. Applying
the paper's own prevalence rule (App.~\ref{app:strat}) to the remaining cases gives $243$ ultra-rare cases over
$63$ diseases. We run the \emph{identical} retriever, same code, same HPOA index, same IC weighting, no
refitting; Table~\ref{tab:external} reports it.

\begin{table*}[t]
\centering\footnotesize\setlength{\tabcolsep}{5pt}
\begin{tabular}{@{}llrrr@{}}
\toprule
cohort & curation source & $n$ & R@1 & clustered 95\% CI \\
\midrule
RAMEDIS & Bielefeld U.\ / Reutlingen children's hosp.\ (DE) & $177$ & $19.8\%$ & \\
HMS     & Germany                                          & $27$  & $0.0\%$  & \\
MME     & CHEO / SickKids / U.\ Toronto (CA)               & $39$  & $53.8\%$ & \\
\midrule
pooled  &                                                  & $243$ & $23.0\%$ & $14.7$--$32.7\%$ \\
\quad disease-disjoint & (diseases absent from Phenopacket Store) & $216$ & $19.4\%$ & $12.2$--$28.7\%$ \\
\bottomrule
\end{tabular}
\caption{The identical retriever on three rare-disease cohorts curated independently of Phenopacket Store,
with no refitting. Intervals are clustered on disease; the disease-disjoint row keeps only cases whose
diagnosis never appears in our own corpus.}
\label{tab:external}
\end{table*}

\textbf{External estimates and the sensitivity range.} Disease-clustered, the external interval sits at the
upper end; clustered at the \emph{cohort} level, the right level, since a deployment draws a cohort and not a
disease, it contains both endpoints, and a $\chi^2$ test rejects a common rate across the three
($p<10^{-6}$). We therefore report corroboration, not identification. Reporting the cohort we excluded makes the
same point: LIRICAL, whose cases come from published case reports inside the HPO ecosystem, scores $52.0\%$
($n{=}244$), twice the independent cohorts and close to the undecontaminated Phenopacket Store tail
($49.4\%$), which is what the leakage account predicts. The strict control is thus over-conservative rather than merely conservative: deleting every gold term
the case's own publication supports removes genuine phenotype signal, not only leakage. Restricting to the $216$
cases whose disease does not appear in Phenopacket Store at all, so that neither the case nor the disease is
shared with our corpus, gives $19.4\%$, still excluding the strict endpoint. Per-disease averaging gives
$29.2\%$, i.e.\ the estimate does not depend on case concentration.

\textbf{Limits of the external-cohort evidence.} Between-cohort spread ($0.0$--$53.8\%$) is far wider than the sampling
uncertainty within any one of them, so ${\sim}23\%$ is an estimate for \emph{this mixture} of independent cohorts,
not a universal constant; a deployment in a population resembling HMS should expect much less. HMS's zero is
partly structural, $3$ of its cases have a gold disease with no phenotypic HPOA annotation at all, hence
unretrievable by any phenotype method, but most of it is genuine failure. $n{=}243$ is small against the main
benchmark's $4{,}780$. Finally, RareBench's HPO codings were produced by its own authors; if that mapping
consulted HPOA disease profiles, a weaker second-order coupling survives that these files cannot test. We
therefore treat the external estimate as pricing the \emph{first-order} curator confound, which is the one that
made the interval five-fold wide, and continue to report the sweep in App.~\ref{app:decon} rather than replacing
it.

\section{Feasibility, Calibration and Delivery Rules}

What the feasibility bound forbids, what the incumbent confidence signals can actually resolve, and how both move with the delivery rule.

\subsection{Auditing the Feasibility Criterion}
\label{app:audit}
The criterion is one line of algebra, so its interest is in whether it \emph{discriminates} when applied
broadly. We read it across every (predictor, stratum) cell we have, ten LLMs by prevalence bin, our
retriever, Exomiser by its own score and by its shipped $p$-value, and the SapBERT and BM25 entity linkers by
concept frequency, at $c{=}10\%$.

\textbf{Most cells are non-binding.} In $21$ of the $29$ the base rate is high enough that the
ceiling is $1.0$, so no $\tau$ forbids them; only $8$ are informative. At $\tau{=}50\%$, $7$ of those $8$
forbid. Among the $22$ permitting cells the incumbent's own confidence reaches the target in $10$, but only
$15$ of them are measurable at all, because in the other $7$ the confidence signal is too tie-collapsed to
resolve a $10\%$ coverage. We report that denominator rather than scoring an unmeasurable cell as a failure.

\textbf{The forbidding cells, read at realised coverage.} A cell's ceiling is computed at $c{=}10\%$, but a
tie-collapsed signal answers fewer cases than that, and a smaller coverage raises the ceiling. Each cell must
therefore be read at the coverage its own signal actually resolves: $4$ of the $7$ forbidding cells are
\emph{not} forbidden there, and in $2$ more the signal resolves no coverage at all, so exactly \emph{one} of
the seven is a measured forbidding cell. That reading answers a different question from the one the criterion poses,
since a smaller coverage raises the ceiling, so it is a diagnostic about the signals' resolution, not a rescue of
the cells. The ceiling itself cannot be violated: $\mathrm{sel\text{-}acc}(c)\leq\min(1,p/c)$ is a counting
identity, so no cell could contradict it and none does. What is measured rather than entailed is the single
forbidding cell whose signal actually operates at $c{=}10\%$.

\textbf{Interpretation of the feasibility audit.} It shows the criterion is not vacuous: it forbids where base rates
have collapsed and permits where they have not, along a boundary that matches the rest of the paper, every
forbidding cell is on the ultra-rare stratum of the rare-disease task, while the entity-linking
cells have base rates of $65$--$84\%$ and are never forbidden. It does \emph{not} show seven independent
rescues: the external arm in those seven cells is the same retriever margin on the same ultra-rare cases
against seven different incumbents, so behind them lie only $2$ distinct external measurements.

\subsection{Confidence Calibration and Risk--Coverage}
\label{app:calib}
We score three confidence signals (verbalized-overall, verbalized-top1, mean token log-probability) by
AUROC-of-correctness and ECE per bin, with bootstrap $95\%$ CIs. (Correctness here is nearest-name cosine
$\geq\!0.90$; Tables~\ref{tab:collapse},~\ref{tab:method} use exact OMIM-id and the two agree qualitatively.)
Verbalized confidence's tail discrimination splits by scale: the \emph{smallest} models are near chance ($7$B
ultra-rare own-answer AUROC $0.571$, wide CI, $0.604$ on the full set; Llama-8B $0.509$), whereas the two larger
models rank their own tail answers \emph{well} ($14$B $0.777$, $32$B $0.905$). Good ranking does \emph{not} rescue
them: by the base-accuracy ceiling (\S\ref{sec:calib}) even Qwen-32B's near-oracle discrimination leaves its
most-confident $10\%$ far below a safe accuracy (Table~\ref{tab:method}). Read as a coverage, the same bound
says a small model can answer at most $p/\tau$ of the tail at accuracy $\tau$, which at $\tau{=}50\%$ is
$1.5$--$9.2\%$ of it, so raising coverage to $c{=}25\%$ is missed by more than $2.7\times$. This is because so few tail answers are correct
to begin with, so we do \emph{not} read the collapse as a discrimination failure. ECE worsens sharply on
the tail ($32$B less-rare $0.42\!\to\!$ ultra-rare $0.73$, corrected bins), and the $32$B mean-logprob AUROC
\emph{inverts} ($0.79\!\to\!0.36$, CI below $0.5$), so likelihood-based abstention would prefer wrong answers.
\textbf{Comparison across feasibility regimes.} If $p\!\ge\!\tau c$ is doing work rather than
decorating an identity, the external gate's advantage should decline as the incumbent's base accuracy rises. It
does. Table~\ref{tab:regimes} reads it at matched $10\%$ coverage, with the LLM's confidence given its
\emph{most favourable} tie-breaking.

\begin{table*}[t]
\centering\footnotesize\setlength{\tabcolsep}{6pt}
\begin{tabular}{@{}lccccc@{}}
\toprule
configuration & $p$ (tail base) & oracle ceiling & feasible? & margin@$10\%$ & advantage \\
\midrule
Yi-1.5-9B        & $0.7\%$  & $7\%$   & no  & $82.8\%$ & $+79.6$ \\
InternLM2.5-7B   & $1.4\%$  & $14\%$  & no  & $81.7\%$ & $+76.3$ \\
Mistral-7B-v0.3  & $1.6\%$  & $16\%$  & no  & $77.4\%$ & $+62.4$ \\
Qwen2.5-VL-7B    & $1.7\%$  & $17\%$  & no  & $80.6\%$ & $+66.7$ \\
Qwen2.5-14B      & $4.2\%$  & $42\%$  & no  & $74.2\%$ & $+46.2$ \\
Qwen2.5-VL-32B   & $4.3\%$  & $43\%$  & no  & $75.6\%$ & $+51.1$ \\
Llama-3.1-8B     & $4.6\%$  & $46\%$  & no  & $79.6\%$ & $+66.7$ \\
\midrule
DeepSeek-V4-Flash    & $15.8\%$ & $100\%$ & \textbf{yes} & $77.4\%$ & $+8.6$ \\
V4-Pro (reason.\ off) & $18.6\%$ & $100\%$ & \textbf{yes} & $76.3\%$ & $\mathbf{+0.0}$ \\
V4-Pro (reason.\ on)  & $22.4\%$ & $100\%$ & \textbf{yes} & $76.2\%$ & $\mathbf{-2.4}$ \\
\bottomrule
\end{tabular}
\caption{The external gate's advantage against the incumbent's own confidence, at matched $10\%$ coverage and
with the LLM's confidence given its most favourable tie-breaking. \emph{feasible?} is whether
$p\geq\tau c$ holds at $\tau{=}50\%$. The advantage collapses exactly where the ceiling stops binding.}
\label{tab:regimes}
\end{table*}

Mean advantage $+64.1$pp in the seven ceiling-bound configurations and $+2.1$pp in the three where it has lifted.

\textbf{Do not read a dose-response off this table.} The margin band is nearly flat ($74$--$83\%$), so the
advantage is essentially a constant minus the confidence band, and that band is bounded by $\min(1,p/c)$, a
deterministic function of $p$. Substituting \emph{any} quality of confidence ranker reproduces the observed
$r{=}{-}0.970$ to within $0.03$, and residualising on the ceiling leaves $-0.15$. The apparent trend is the
ceiling identity restated and we claim nothing from it.

\textbf{A lifted ceiling is not sufficient.} Sweeping \emph{coverage} on a fixed model moves the ceiling with
$p$ held exactly constant, which separates the ceiling from model quality. Llama-3.1-8B ($p{=}4.6\%$) has a
ceiling of $1.0$ at $c{=}2$--$3\%$ and its confidence band there is $10.7$--$11.1\%$ against the margin's
$77.8$--$82.1\%$: an advantage of $+67$ to $+71$pp in exactly the regime where the criterion no longer binds. The
three frontier configurations, where the two gates do converge, are one base model at three settings from one
vendor, and their per-configuration advantages all straddle zero. We therefore claim $p\!\ge\!\tau c$ as a
necessary condition, failing it forecloses every confidence policy, and claim nothing from passing it.

Two further limits on this table. The configurations cluster at $p\!\le\!4.6\%$ and $p\!\ge\!15.8\%$, so
$\tau c{=}5\%$ falls in an empty gap and the boundary's \emph{location} is not identified by our data. And the
confidence bands use the tie-breaking most favourable to the LLM; under random tie-breaking they fall further,
which widens the margin's advantage. It does \emph{not} leave everything unchanged: the go/no-go table's
permitting cells depend on it, falling from three of three to one of three (\S\ref{sec:calib}).

\textbf{Deployment risk--coverage.} none of these AUROCs buys a safe operating point. Gating
each deployable system by its own signal, a retriever-first system answers the top $10\%$ of ultra-rare cases at
$74$--$83\%$ accuracy, versus $2$--$18\%$ for the bare LLM under its own confidence (base Recall@1 $\leq\!5\%$).
Both numbers are levels, and the level is the part of this result that is \emph{not} invariant over the leakage
bracket: at the strict endpoint the retriever-first band falls to $12.3\%$, below Qwen-14B's observed $18\%$.
What survives the bracket is the \emph{lift} over each system's own base rate ($2.4$--$3.3\times$ for the
margin). A lift is what our claim rests on and it is not an operating point: siting a threshold needs the
level, which the bracket does not fix. The ordering of levels holds under leave-source-out and not under the
strict control.

\textbf{Certifying the operating point.} Our threshold is chosen, not certified, and this appendix records what
certifying it would take. Split-conformal selective prediction calibrates a margin threshold on a held-out half
so that empirical error on the answered part is at most $\alpha$, then reports the coverage achieved on the
test half; the guarantee is distribution-free and finite-sample, so it holds whatever the absolute accuracy is,
which is why the identification bracket of App.~\ref{app:decon} changes how many patients can be served at a
given promise rather than breaking the promise. At $\alpha{=}20\%$ and $\delta{=}0.10$ the calibration half
needs ${\approx}2{,}660$ answered cases, which at $10\%$ coverage is ${\approx}26{,}600$ patients; $\alpha{=}30\%$
needs ${\approx}50$ and $\alpha{=}40\%$ ${\approx}20$. The binding problem is not sample size but dependence.
Correctness is clustered within disease over $207$ diagnoses. A one-way random-effects ANOVA on those cases
gives ICC $0.553$ and a design effect of $2.91$, an effective $n$ of $322$; a second estimator archived with
the audit gives $0.704$ and $268$. We quote the range, $268$--$322$ independent cases, and a cluster-aware certifier evaluated on \emph{held-out
diseases} breaches its nominal risk on $11$--$44\%$ of splits against $13$--$23\%$ under i.i.d.\ case splits.
A deploying site routinely meets diagnoses absent from any calibration set it could have assembled, so the
held-out-disease split is the regime that matters.

\subsection{Recall at $k$}
\label{app:topk}
Every accuracy number in this paper is top-1, and the feasibility criterion is a top-1 argument, but the
systems emit a ranked top-5
differential and a clinician reads one. The two are not interchangeable, so we measure the gap rather than
concede it. Each of the five generated names is linked to OMIM by the same SapBERT nearest-candidate rule used
for top-1, so a name is scored at rank $5$ exactly as it is scored at rank $1$. One caveat belongs here rather
than in a footnote: BioMistral-7B emits a mean of $1.15$ names per case and exactly one in $90\%$ of them, so its
``$k{=}5$'' column is $k{=}1$ by output-format failure. Its ceiling is below $\tau$ either way, so the $4$-of-$8$
count does not change, but one of those four is a degenerate cell and the table should not be read as though it
were not.

\begin{table*}[t]
\centering\footnotesize
\setlength{\tabcolsep}{6pt}
\begin{tabular}{@{}lcccc@{}}
\toprule
 & less-rare & ultra-rare & ceiling & forbids \\
 & \multicolumn{2}{c}{\footnotesize R@1 / R@5} & \footnotesize @1 / @5 & \footnotesize at $k{=}5$ \\
\midrule
Qwen2.5-VL-7B         & $36.5\%$/$42.8\%$ & $1.7\%$/$3.8\%$ & $0.17$/$0.38$ & \textbf{yes} \\
Qwen2.5-14B           & $40.5\%$/$47.3\%$ & $4.2\%$/$8.1\%$ & $0.42$/$0.81$ & no \\
Qwen2.5-VL-32B        & $40.6\%$/$43.4\%$ & $4.3\%$/$7.3\%$ & $0.43$/$0.73$ & no \\
Llama-3.1-8B          & $39.2\%$/$43.2\%$ & $4.6\%$/$8.4\%$ & $0.46$/$0.84$ & no \\
Yi-1.5-9B             & $7.2\%$/$27.0\%$ & $0.7\%$/$2.7\%$ & $0.07$/$0.27$ & \textbf{yes} \\
InternLM2.5-7B        & $32.9\%$/$41.0\%$ & $1.4\%$/$2.2\%$ & $0.14$/$0.22$ & \textbf{yes} \\
Mistral-7B-v0.3       & $40.1\%$/$44.1\%$ & $1.6\%$/$6.7\%$ & $0.16$/$0.67$ & no \\
BioMistral-7B         & $36.0\%$/$36.5\%$ & $2.4\%$/$2.6\%$ & $0.24$/$0.26$ & \textbf{yes} \\
\midrule
DeepSeek-V4-Flash     & $40.5\%$/$49.1\%$ & $15.8\%$/$20.6\%$ & $1.00$/$1.00$ & no \\
DeepSeek-V4-Pro       & $48.2\%$/$50.9\%$ & $18.6\%$/$22.9\%$ & $1.00$/$1.00$ & no \\
V4-Pro (reason.\ on)  & $49.5\%$/$53.7\%$ & $22.4\%$/$26.6\%$ & $1.00$/$1.00$ & no \\
\bottomrule
\end{tabular}
\caption{Recall at the $k$ the deliverable is defined at, and the feasibility ceiling $\min(1,p/c)$ recomputed
from the base rate at that $k$. \emph{Forbids} marks the cells where the ceiling still lies below
$\tau{=}0.5$ at $c{=}10\%$, i.e.\ where the ceiling's arithmetic still applies. It does so for all $8$ small models at
$k{=}1$ but only $4$ at $k{=}5$.}
\label{tab:topk}
\end{table*}

Two things follow, and they cut in opposite directions. The collapse itself is robust to $k$: the small models'
ultra-rare Recall@5 is $2$--$8\%$, against a less-rare anchor that stays near $40\%$, so the collapse is not
a top-1 artifact. The infeasibility verdict is more fragile. The ceiling forbids the $\tau{=}50\%$, $c{=}10\%$ operating point for all $8$
small models at $k{=}1$ but for only $4$ at $k{=}5$: Qwen2.5-14B, Qwen2.5-VL-32B, Llama-3.1-8B and
Mistral-7B-v0.3 pass out of the forbidden region once success means gold-in-top-5. For those four the question
stops being arithmetic and becomes empirical, we have not shown that any confidence policy on them
\emph{does} reach the operating point, only that the ceiling no longer rules it out. The recommendation itself
is unaffected: the gate's advantage at $k{=}5$ is measured separately (App.~\ref{app:whichcases}) and survives.

\section{Phenotype-Ranker Analyses}

Whether a deployed tool reproduces the margin result, whether it survives incomplete phenotyping, and which patients the gate ends up selecting.

\subsection{Exomiser in Phenotype-Only Mode}
\label{app:exomiser}
\begin{figure}[t]
\centering\includegraphics[width=\columnwidth]{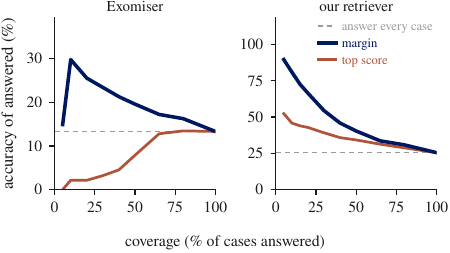}
\caption{Selective accuracy at different coverage levels for phenotype-only Exomiser (left) and our phenotype
retriever (right), selecting by the top score or the top-two margin; the horizontal line is accuracy when all
cases are answered. On Exomiser the top score never rises meaningfully above answering everything, so no
coverage makes it a usable gate.}
\label{fig:rc}
\end{figure}
We do not apply leave-source-out to Exomiser, so its recall belongs beside our undecontaminated
figures rather than beside the retriever's headline.
To anchor our reimplemented retriever against a tool clinicians actually run, we execute Exomiser 14.0.0
\citep{smedley2015exomiser} (2402 data release) in \emph{phenotype-only} mode, the mode used by
\citet{reese2026benchmarking}, on the identical $N{=}2000$ sample: each case's HPO terms as a phenopacket with no
variants (a shared empty hg38 VCF satisfies the CLI; the \texttt{phenotype-only} preset scores no variants).
Exomiser ranks genes by hiPHIVE phenotype similarity; we score \textbf{gene-level} Recall@1, did it rank the
case's causative gene (from the phenopacket) first, which for these Mendelian cases proxies disease-level but is
Exomiser's native output and \emph{not directly comparable} to the disease-level LLM/retriever numbers.

\textbf{Accuracy.} Gene-R@1 is $21.2\%$ (less-rare) / $13.3\%$ (ultra-rare) / $6.0\%$ (unknown); R@5
$42$/$28$/$13\%$. This is comparable to our retriever and $2.9$--$17.9\times$ the bare LLM's tail across the small models, but \emph{lower} than
the $35.5\%$ \citet{reese2026benchmarking} report, expected, given the gene-level metric (the causative
gene must rank first among ${\sim}20{,}000$, a different candidate space), the phenopacket-store case mix, and phenotype-only prioritization
without variant evidence. It confirms the paradigm-level point (phenotype tools $\gg$ bare LLM on the tail)
without a horse race, and the modest absolute ceiling ($13$--$24\%$ for both tools) is exactly why the
deployable contribution is triage, not accuracy.

\textbf{Margin-based triage with Exomiser.} Exomiser's top-1 combined \emph{score} cannot define a
$10\%$ operating point on this tail at all: \emph{no} case is strictly above the cut, and the whole decile is
drawn by sort order from a single $228$-case tie block, so the $2.2\%$ it appears to deliver is an artifact of
that ordering and not a gradient; we score ties by their expected contribution rather than by an arbitrary
break \citep{mcsherry2008ties}, and report the range the break spans where it matters. But ranking by the top1$-$top2 \emph{margin}, the top-$10\%$ reach
$29.0\%$ (ultra-rare) / $40.9\%$ (less-rare), well above base. So the margin-triage property of \S\ref{sec:method}
is not an artifact of our reimplementation: a real deployed tool's own margin also yields a usable operating
point on the tail. Two alternative readings of that result do not survive. It is not the tie block: on the
$711$ cases whose phenotype score has not saturated, the margin's decile still beats the raw score's by
$+28.5$pp, disease-clustered $[+15.4,+42.3]$pp, though the raw decile is only nominally below its own base
rate there, so we claim of it that it buys nothing rather than that it hurts. Nor is the raw score a straw
target: over the whole tail it gates at AUROC $0.383$, $0.507$ once the block is removed. The empirical
$p$-value Exomiser ships, a principled normalization already used in clinical filtering and the statistic a
deployment reaches for first, does no better: AUROC $0.415$, and as a triage signal its most confident $10\%$
delivers $7.5\%$, below the $13.3\%$ base rate. That is the ordering App.~\ref{app:margin_component}
predicts, since a normalization derived from the input rather than from the competitors in contention does
not recover the gate. Usable, not safe: $29.0\%$ is far below the $\tau{=}50\%$ this paper uses as its worked
example, and it is the margin, not the raw score, that yields it.

\subsection{Robustness to Phenotype Incompleteness}
\label{app:robust}
The benchmark's HPO terms are curated \emph{post-diagnosis}, so they are more complete and precise than a
prospective work-up, which could inflate the phenotype-only retriever. To test how far our conclusions survive
incomplete phenotyping, we drop a fraction $\rho$ of each case's \emph{present} HPO terms, at random, and, as an
\emph{adversarial worst case} for a phenotype-overlap retriever, \emph{highest information-content first} (dropping
the rare, distinctive findings it most relies on; the opposite regime, missing common low-IC terms, would hurt
it less), and re-run \emph{both} predictors on the \emph{identical} degraded
input: the retriever is re-scored, and the frontier DeepSeek-V4-Pro is re-queried on the degraded prompts (same
seed-0 $N{=}2000$ sample). At $\rho{=}0$ the harness reproduces the retriever's clean tail number on this sample
($25.6\%$; $25.6\%$ on the full decontaminated set).

\begin{table*}[t]\centering\small
\begin{tabular}{@{}lccc@{}}
\toprule
ultra-rare Recall@1 (highest-IC-first dropout, $N{=}2000$) & clean & drop $30\%$ & drop $50\%$ \\
\midrule
IC-overlap retriever                & $25.6\%$ & $\mathbf{16.6\%}$ & $\mathbf{13.9\%}$ \\
frontier LLM, reasoning \emph{off}  & $18.6\%$ & $5.6\%$ & $4.5\%$ \\
\bottomrule
\end{tabular}
\caption{Ultra-rare Recall@1 under simulated incomplete phenotyping, dropping the highest-information
findings first. Both predictors are re-run on the \emph{identical} degraded input; the LLM arm is
DeepSeek-V4-Pro with reasoning off, held per-case.}
\label{tab:robust}
\end{table*}

\noindent \textbf{Effects on the main conclusions.} \emph{The tail ranking is robust}: ``no LLM surpasses
the retriever on the tail'' survives realistic incompleteness, because the frontier LLM degrades at least as fast
on the tail (decisively for reasoning-off; a partial $n{=}127$ check for reasoning-on), so the predictor-selection
recommendation stands, and the anticipated failure (the LLM's
parametric priors letting it overtake the retriever under noisy input) does \emph{not} occur. The
deployment-relevant comparison survives too: under the same dropout the retriever's margin band still triages the
top-$10\%$ far above the frontier LLM's own confidence band ($62.4\%$ vs.\ $21.5\%$ at $\rho{=}0.30$, $39.8\%$ vs.\
$12.9\%$ at $\rho{=}0.50$; the LLM band on the looser cosine-correctness of App.~\ref{app:calib}, generous to the
LLM), because the LLM's cratered tail base rate re-imposes the ceiling on its own confidence. \emph{The absolute
operating point is not robust}: the retriever's tail accuracy falls, and its margin-triage safeguard weakens with
it (the \emph{retriever's own} top-$10\%$ accuracy, $82.8\%\!\to\!62.4\%\!\to\!39.8\%$ at $\rho{=}0,0.30,0.50$;
Table~\ref{tab:method}'s $74$--$81\%$ is the delivered-hybrid figure), so the ``safe''
band is itself conditional on phenotyping quality, a caveat we state where the claim is made
(\S\ref{sec:method}, Limitations). The base-accuracy ceiling and the small-model negative are \emph{un}affected:
degrading phenotypes only lowers the tail base rate, tightening the ceiling.

\subsection{Characteristics of Answered Cases}
\label{app:whichcases}
A margin gate would be of little clinical interest if it fired only on cases carrying one unmistakable,
disease-defining finding, the cases a specialist resolves without help. We test this directly on the full
ultra-rare tail ($n{=}4{,}780$): rank cases by the decontaminated retriever's top1$-$top2 margin, take the answered
top-$10\%$ ($n{=}478$), and compare them with the deferred remainder.

\begin{table*}[t]\centering\small
\begin{tabular}{@{}lcc@{}}
\toprule
ultra-rare tail ($n{=}4{,}780$) & answered (top-$10\%$ margin) & deferred \\
\midrule
Recall@1                                   & $\mathbf{81.0\%}$ & $19.2\%$ \\
\# matched gold findings                   & $\mathbf{65.4}$   & $32.9$ \\
\# HPO terms in the case                   & $15.3$            & $9.0$ \\
share of evidence from the single strongest finding & $5.3\%$  & $10.4\%$ \\
\quad, \,relative to the uniform $1/n$ share        & $2.7\times$ & $2.5\times$ \\
max IC of a matched finding                & $7.08$            & $4.43$ \\
cases per disease in the corpus (frequency proxy) & $\mathbf{51.9}$ & $111.0$ \\
\bottomrule
\end{tabular}
\caption{Characteristics of the cases the margin gate answers against those it defers. \emph{share of
evidence} is the single strongest matched finding's contribution, given raw and relative to the uniform
$1/n$ share; the frequency proxy is that disease's case count in this corpus.}
\label{tab:whichcases}
\end{table*}

The answered cases are not ``one give-away finding'' cases. They are more richly phenotyped and are decided
by roughly twice as many converging findings, and the single strongest finding carries the \emph{same}
relative share of the evidence in both groups ($2.7\times$ against $2.5\times$ the uniform share), so the raw
difference ($5.3\%$ against $10.4\%$) follows mechanically from having more matched findings rather than from
greater peakiness. What distinguishes the answered set is the breadth of converging evidence. The gate also
fires on \emph{less} frequently represented diseases ($51.9$ against $111.0$ cases per disease), so it is not
concentrating on the corpus's famous entities. These are properties of case \emph{structure}, a proxy for
where multi-finding integration is hard; we do not measure clinician performance.

\textbf{Per-disease.} The answered decile spans $94$ diseases (top disease $13.2\%$ of it), so we also report it
macro-averaged: $81.0\%$ micro becomes $64.6\%$ per-disease (disease-clustered $95\%$ CI $[54.7,73.5]$), still
$2.6\times$ its own base. The gate's advantage is not an artifact of a few well-represented diseases.

\section{Retriever--LLM Hybrids and Trained Fusion}

\label{app:method}
\label{app:fusion}
All numbers use the decontaminated retriever (App.~\ref{app:decon}) and the $N{=}2000$ per-model sample, with
$2000$-resample bootstrap intervals.

\textbf{Hybrid-gate performance.} We gate the delivered answer of a hybrid predictor (keep the LLM top-1 when
the retriever ranks it in its top-5, else output the retriever's top-1) by each signal. On every tail bin the
retriever's own margin dominates. Over Table~\ref{tab:method}'s four small configurations the ultra-rare
AUROC is $0.74$--$0.77$ for the margin against $0.54$--$0.56$ for LLM confidence and $0.48$--$0.53$ for
agreement, and $0.73$--$0.74$ against $0.45$--$0.52$ and $0.49$--$0.58$ on unknown-prevalence cases. Only on
less-rare disease does agreement lead ($0.79$--$0.85$). Relaxing the
operating point from $10\%$ to $25\%$ coverage shows the usual selective-prediction trade: on the ultra-rare tail the margin band
falls from $74$--$83\%$ to $50$--$57\%$ across all ten gated configurations
(Table~\ref{tab:method}'s four span $74$--$81\%$), so by a quarter coverage the answered set already carries
$43$--$50\%$ errors. This is why we describe the gate as a triage safeguard rather than a cure.

\textbf{Agreement as an LLM reliability signal.} Whether retriever--LLM agreement predicts the LLM's
\emph{own} top-1 correctness is a separate question from whether it should gate the delivered answer, and it is
worth separating because agreement could be a restatement of case difficulty. It is not:
Table~\ref{tab:reliability} shows agreement predicting LLM correctness far above the ``the retriever itself
solved the case'' difficulty proxy on all $18$ bins, including the ultra-rare tail ($32$B tail $0.919$
against $0.787$). Agreement is therefore a genuine, non-circular reliability signal, but once a system already
defers to the retriever, the LLM's contribution to the \emph{delivered} tail answer vanishes.

\begin{table*}[t]
\centering\small
\begin{tabular}{@{}lcccccc@{}}
\toprule
 & \multicolumn{3}{c}{agreement $\to$ LLM-correct} & \multicolumn{3}{c}{trivial ``retriever solved it'' $\to$ LLM-correct} \\
\cmidrule(lr){2-4}\cmidrule(lr){5-7}
Model & less & ultra & unknown & less & ultra & unknown \\
\midrule
Qwen2.5-VL-7B  & \textbf{0.990} & \textbf{0.887} & \textbf{0.840} & 0.579 & 0.688 & 0.683 \\
Qwen2.5-14B    & \textbf{0.917} & \textbf{0.852} & \textbf{0.820} & 0.544 & 0.674 & 0.640 \\
Qwen2.5-VL-32B & \textbf{0.918} & \textbf{0.919} & \textbf{0.855} & 0.557 & 0.787 & 0.703 \\
Llama-3.1-8B   & \textbf{0.890} & \textbf{0.904} & \textbf{0.814} & 0.514 & 0.732 & 0.629 \\
DeepSeek-V4-Flash & \textbf{0.869} & \textbf{0.762} & \textbf{0.796} & 0.572 & 0.529 & 0.713 \\
DeepSeek-V4-Pro   & \textbf{0.869} & \textbf{0.764} & \textbf{0.776} & 0.591 & 0.575 & 0.673 \\
\bottomrule
\end{tabular}
\caption{Predicting the LLM's \emph{own} top-1 correctness (AUROC, decontaminated retriever; six models incl.\ the
two frontier ones). Retriever--LLM agreement (left) is far above the trivial ``did the retriever's own top-1 match
gold'' difficulty proxy (right) on every bin, so agreement carries reliability information beyond case difficulty
(non-circular). This is the \emph{estimator} view; for \emph{deployment} the retriever's own margin is the better
gate (Table~\ref{tab:method}).}
\label{tab:reliability}
\end{table*}

\textbf{Trained fusion.} Our deployable rules are heuristics, so the negative in \S\ref{sec:method} could be an
artifact of choosing the wrong rule, especially since an oracle over $\{$LLM, retriever$\}$ leaves real headroom
on the tail. We therefore fit the obvious harvester: a per-case ``trust the LLM?'' classifier on the deployable
signals (LLM verbalised confidence, LLM mean token log-probability where available, the retriever's
top1$-$top2 margin, and the retriever's rank of the LLM's top-1), delivering the LLM's top-1 when it fires and
the retriever's otherwise, scored on held-out folds ($5$-fold stratified CV, so no case is decided by a
classifier that saw it). We report the better of logistic regression and gradient boosting.
The complementarity is real but not harvestable. At the frontier an oracle over the two predictors would
reach $39.2\%$ against the retriever's $25.4\%$, yet the trained fusion captures $+0.2$pp and no configuration
yields a significant gain; for the small models the classifier correctly learns never to trust the LLM, so
the fusion reduces exactly to the retriever. The estimate is generous to the fusion, being fitted and
evaluated on the same tail distribution with no deployment shift, and it still does not help. That closes the
objection that our negative is specific to the two heuristic rules of \S\ref{sec:method}.

\begin{table*}[t]\centering\small
\begin{tabular}{@{}lcccccc@{}}
\toprule
ultra-rare tail & LLM & retriever & heuristic hybrid & \textbf{trained fusion} & oracle & $\Delta$ vs.\ retr.\ ($p$) \\
\midrule
Qwen2.5-VL-7B     & $1.7\%$  & $25.6\%$ & $24.9\%$ & $25.6\%$ & $26.3\%$ & $+0.0$ ($1.00$) \\
Qwen2.5-14B       & $4.2\%$  & $25.6\%$ & $23.5\%$ & $25.6\%$ & $27.4\%$ & $+0.0$ ($1.00$) \\
Qwen2.5-VL-32B    & $4.3\%$  & $24.6\%$ & $23.0\%$ & $24.6\%$ & $25.6\%$ & $+0.0$ ($1.00$) \\
Llama-3.1-8B      & $4.6\%$  & $25.6\%$ & $24.1\%$ & $25.6\%$ & $26.8\%$ & $+0.0$ ($1.00$) \\
Yi-1.5-9B         & $0.7\%$  & $25.6\%$ & $24.9\%$ & $25.6\%$ & $26.1\%$ & $+0.0$ ($1.00$) \\
InternLM2.5-7B    & $1.4\%$  & $25.6\%$ & $24.7\%$ & $25.6\%$ & $26.2\%$ & $+0.0$ ($1.00$) \\
Mistral-7B-v0.3   & $1.6\%$  & $25.6\%$ & $24.0\%$ & $25.6\%$ & $26.3\%$ & $+0.0$ ($1.00$) \\
DeepSeek-V4-Flash & $15.8\%$ & $25.6\%$ & $25.2\%$ & $25.8\%$ & $36.5\%$ & $+0.2$ ($0.50$) \\
DeepSeek-V4-Pro (off) & $18.6\%$ & $25.6\%$ & $25.6\%$ & $25.7\%$ & $37.2\%$ & $+0.1$ ($1.00$) \\
DeepSeek-V4-Pro (on)  & $22.4\%$ & $25.4\%$ & $26.7\%$ & $25.7\%$ & $\mathbf{39.2\%}$ & $+0.2$ ($0.77$) \\
\bottomrule
\end{tabular}
\caption{Retriever alone against two ways of combining it with the LLM, on the ultra-rare tail. $\Delta$ is
against retriever-alone by exact paired McNemar; the oracle column is the ceiling a perfect per-case choice
between the two predictors would reach.}
\label{tab:fusion}
\end{table*}

\section{Audit of SapBERT-to-OMIM Linking}

\label{app:linkaudit}

Every LLM number here passes through one instrument: a free-text disease name linked to OMIM by SapBERT
nearest-candidate cosine. If that instrument degraded on ultra-rare names, long, eponymous, numbered, synonym
rich, it would \emph{manufacture} the collapse. Two tests bound this, and neither needs annotation.

\textbf{Held-out surface form.} The test must give the linker a form it has not already been handed, or it
measures nothing: feeding back a candidate's own name scores $100\%$ by construction, since the query embedding
\emph{is} the candidate embedding. We use an Orphanet synonym that is not the OMIM title and that exactly one
entry claims. On $6{,}776$ such probes the ultra-rare tail resolves at $39.6\%$, against $49.9\%$ on less-rare
disease ($n{=}680$; the Wilson intervals do not overlap). Every eligible probe is used and the hash seed is pinned, since
sampling probes from an unordered set moves the result by up to $3$pp. The linker does degrade on the
tail, so we do not rest the argument on it. The weight falls on the next test, whose alias arm is string matching
and does not pass through SapBERT at all.

\textbf{Generous re-scoring.} Credit a model if \emph{any} of its five names links to the gold \emph{or} matches
an alias of it, the gold's OMIM title or an Orphanet synonym, after case, accent and punctuation
normalisation. Orphanet attaches a group's name to every OMIM entry it references, so any normalised form
claimed by more than one entry is discarded ($2{,}895$ of $21{,}161$; $91$ diseases lose every unambiguous
form). Without that guard the rule silently scores at group level: stripping
``syndrome''/``type'' and sorting tokens leaves one form covering $108$ unrelated neurodevelopmental entries, and
\emph{every} extra credit the retriever received came from a different OMIM entry rather than an alias of the
gold.

Under the corrected rule the small models reach $2$--$8\%$ on the tail, against $\leq\!5\%$ strict, the alias
arm adds almost nothing beyond top-5, i.e.\ the linker was already seeing what it could see. Applied to the retriever the same rule adds exactly $0.0\%$, but that is an identity, not a
measurement: after the guard no surviving form is owned by more than one entry, so for an identifier-predicting
system the generous rule reduces to the strict one. The rule is therefore \emph{one-sided}: it can credit a
name-generating system and cannot credit an identifier-predicting one. That asymmetry is why the arithmetic
below, not this $0.0\%$, is what closes the objection.

What closes the objection is arithmetic rather than either test on its own. Take the best small model's strict
tail Recall@1 ($4.6\%$) and inflate it by the linker's own held-out resolution rate on that stratum, as though
\emph{every} miss the linker makes were a correct answer thrown away: $11.6\%$. That is a deliberately
over-generous correction and it still leaves the best small model less than half the retriever's $25.6\%$. What
none of this establishes is the linker's \emph{precision}, whether a link goes to the right disease rather than
a plausible neighbour. That needs a genetics-literate adjudicator and we do not report one; it is a different
quantity from the one the objection turns on.

\section{Cross-Task Transfer and Boundary Conditions}

\label{app:transfer}
This appendix carries the two decisions to standard ranking tasks, in the order the framework poses them:
which statistic reads top-1 correctness, which reads candidate presence, what conditions the margin needs
before either question is well posed, and whether the comparisons survive being read off the whole
risk--coverage curve rather than one operating point.

\subsection{Top-1 Correctness across Scorers}

Entity linking's standard NIL rule thresholds the top-1 score rather than the margin \citep{sevgili2022neuralel}, exactly the
quantity that fails for our prioritizers. On $12{,}741$ BC5CDR disease mentions linked to MEDIC with SapBERT
the two signals are \emph{not distinguishable}: at the $10\%$ operating point this paper uses throughout the
margin is nominally ahead ($92.9\%$ vs.\ $92.2\%$ overall; $96.6\%$ vs.\ $93.2\%$ on the rare-concept tail,
$n{=}591$) while by AUROC the raw cosine is nominally ahead ($0.822$ vs.\ $0.796$), and neither gap survives a
bootstrap clustered on the $3{,}201$ unique mention strings the corpus actually contains
($\Delta$AUROC $-0.027$, $95\%$ CI $[-0.075,+0.021]$; the sign is the margin minus the raw score,
so the point estimate favours the raw score and neither arm is separated). The tie is what the screen calls for rather than a
failure of it: $63.7\%$ of
mentions are exact string matches, so the raw cosine is saturated; on the $4{,}629$ mentions where it is not
(base $53.4\%$) the AUROC ordering \emph{reverses}, margin $0.770$ against raw $0.764$. That is a subgroup observation and not a screen, since \S\ref{sec:ranklimit} rules out screens of this kind, and
it is reported here because the split is diagnostic rather than predictive: where the score has spare
resolution the margin is ahead, and where it has saturated the comparison is decided by ties. The raw arm's
$92.2\%$ is in fact the most favourable of its tie-breaks, with $808$ mentions tied at the $10\%$ cut against
the margin's $49$, so the operating point here is as tie-sensitive as the one we criticise in Exomiser. The
reversal on the unsaturated subset is not itself significant. What we can say is that the recommendation is
established for \emph{unnormalized accumulation} scores, IC-weighted overlap and Exomiser's combined score, and
is untested for already-normalized similarities. The test the scope claim calls for holds the corpus and the candidates
fixed and changes only the scorer's normalization: replacing SapBERT cosine with BM25, an
unnormalized accumulation of IDF-weighted matches, makes the margin significantly better than the raw score by
AUROC ($0.793$ vs.\ $0.743$; bootstrap clustered on unique mention strings, $\Delta$ $95\%$ CI
$[+0.017,+0.082]$), with the same ordering on the rare-concept tail ($0.800$ vs.\ $0.763$); subtracting the
per-mention nuisance the score accumulates, its query IDF mass, is better still ($0.808$). The crossover is in
the predicted direction and was registered before the run. What it establishes is a \emph{ranking} result: at
the $10\%$ operating point the raw score is nominally ahead by $8.2$pp, but that gap is not resolved
($95\%$ CI $-0.6$--$+17.5$, straddling zero), so we claim the crossover for the ranking and nothing for the
operating point. A third arm runs a deployed late-interaction retriever. We score $300$ SciFact queries with the released
ColBERTv2 checkpoint under its own configuration ($128$-dimensional projection, queries padded to $32$
tokens with \texttt{[MASK]}, documents truncated to $180$ with punctuation zeroed, cosine MaxSim), so the
scoring function is the model's own and not a reimplementation. Its top score carries a per-query level ($C\!=\!0.374$), gates at AUROC $0.764$ against a base of $54.3\%$, and its own margin reaches $0.836$.

The SUM-versus-MEAN pair reported below runs on our own encoder rather than this one, and the reason is
structural. On our encoder the top score's scale tracks query length at $\rho\!=\!0.97$ and query-length
normalization recovers about half of what the margin recovers. The deployed model has no such dependence,
$\rho\!=\!+0.003$, because
ColBERT's query augmentation pads every query to the same $32$ tokens, so its sum runs over a constant number
of terms and dividing by query length divides by a constant. The two aggregations are then the same ranking and the
same score up to scale, so on the deployed model there is no SUM-versus-MEAN contrast to draw. On our own
encoder there is, and it is a controlled manipulation of normalization with the corpus, the candidates and the
encoder held fixed, one of the two controlled pairs \S\ref{sec:scope} uses to test $C$. The two arms make
different points: the pair isolates normalization, and the deployed run shows a large between-input share and a
large margin advantage in a system we did not build.
\subsection{Candidate Presence}
\textbf{Link or NIL: the same score read the other way.} Every BC5CDR mention above has a gold concept in
MEDIC, so those arms measure only whether the top-1 is correct. The decision a deployed linker also faces is
whether to link at all, and the decomposition of Eq.~\eqref{eq:scoredecomp} sends the two decisions to
different statistics: correctness is a question about $r_1-r_2$, whereas a mention whose gold concept is
absent is marked by the whole candidate set scoring low, which is $b(x)$, the term the margin cancels. We
therefore predicted, before running it, that the margin would \emph{lose} the NIL decision under both scorers,
and that the raw top score would be a good NIL detector only for the scorer whose $b(x)$ is small.

Following \citet{zhu2023nil}, we mask a random slice of MEDIC out of the candidate space, so mentions whose
gold concepts are all masked become genuinely unlinkable; masking $10/25/50\%$ of concepts makes $10.0/25.8/44.7\%$
of the $12{,}741$ mentions NIL. Both questions are then asked of one run, with bootstrap intervals clustered
on the mention string.

\begin{table}[t]
\centering\footnotesize\setlength{\tabcolsep}{3.5pt}
\begin{tabular}{@{}llcccc@{}}
\toprule
 & & raw & margin & gap@$10$ & level \\
\midrule
\multicolumn{6}{@{}l}{\emph{Q\textsubscript{NIL}: is the gold concept in the KB?}} \\
SapBERT & mask $10\%$ & $\mathbf{0.842}$ & $0.767$ & $0.825$ & -- \\
        & mask $25\%$ & $\mathbf{0.841}$ & $0.787$ & $0.832$ & -- \\
        & mask $50\%$ & $0.843$ & $0.797$ & $\mathbf{0.845}$ & -- \\
BM25    & mask $10\%$ & $0.673$ & $0.725$ & $0.636$ & $\mathbf{0.775}$ \\
        & mask $25\%$ & $0.657$ & $0.721$ & $0.645$ & $\mathbf{0.767}$ \\
        & mask $50\%$ & $0.702$ & $0.689$ & $0.657$ & $\mathbf{0.771}$ \\
\midrule
\multicolumn{6}{@{}l}{\emph{Q\textsubscript{correct}: among linkable, is the top-1 right?}} \\
SapBERT & mask $25\%$ & $\mathbf{0.869}$ & $0.831$ & $0.862$ & -- \\
BM25    & mask $25\%$ & $0.760$ & $0.808$ & $0.761$ & $\mathbf{0.850}$ \\
\bottomrule
\end{tabular}
\caption{Two abstention decisions on one corpus (AUROC). \emph{margin} and \emph{gap@$10$} are both zero-sum
contrasts over the candidates, $w=(1,-1,0,\dots)$ and $w=(1,-\tfrac19,\dots,-\tfrac19)$, so by
Eq.~\eqref{eq:contrast} each cancels $b(x)$ exactly. \emph{level} is instead the top score with a case-side
nuisance estimate removed, the query's own IDF mass, which only BM25 admits: SapBERT's cosine is bounded and
its raw score is already the level, hence the dashes.}
\label{tab:nil}
\end{table}

Table~\ref{tab:nil} reports it. Under SapBERT the raw cosine is the better NIL detector at every masking rate
and the margin is worse by a paired $-0.054$ ($95\%$ CI $[-0.088,-0.023]$ at $25\%$), as predicted. Under BM25
the raw score is a far weaker NIL detector, $0.657$--$0.702$ against the cosine's $0.841$--$0.843$, because its
level is dominated by the query's own IDF mass rather than by evidence about the knowledge base; subtracting
that mass beats the raw score by $+0.102/+0.109/+0.067$ across the three rates, every interval excluding zero.
BM25 is the one scorer here that carries both families on one run, so the two can be told apart rather than
argued about: alongside that case-side level sits the top score against the mean of the field, a zero-sum
contrast by Eq.~\eqref{eq:contrast}, and it is the \emph{worst} of the four statistics at every rate
($0.636$--$0.657$ against the level's $0.767$--$0.775$), falling behind even the raw score at $50\%$ masking
($-0.045$, $[-0.076,-0.015]$). Estimating the case level off the contenders does not approximate it; it
removes it.
The same protocol on MedMentions replicates the core and is reported here rather than only counted.
Disease-mention linking against UMLS on a PMID-disjoint split ($n{=}3{,}831$ evaluation mentions) gives a
knowledge base with $2.1$ surface forms per concept against MEDIC's seven, so the aliases the cosine relies on
are far thinner. Over the same $6$ (masking rate $\times$ scorer) cells the margin is the best
$Q_{\mathrm{NIL}}$ statistic in none and is resolvedly behind the raw score in $3$: at $10\%$ masking under
SapBERT the raw cosine reaches $0.902$ against the margin's $0.684$, a paired $-0.216$ ($[-0.320,-0.117]$
clustered on the mention string), three times the size of the same quantity on BC5CDR. What does not travel is
the BM25 half, and the thin alias base is why.

Our registered prediction had two clauses and one of them failed. The margin is never the best
$Q_{\mathrm{NIL}}$ statistic under either scorer, and it stays ahead of the raw score on $Q_{\mathrm{correct}}$
under BM25; that dissociation is what the decomposition predicts, and it holds in all six cells here and all
six on MedMentions. But we also predicted the margin would lose to the raw score under \emph{both} scorers,
and under BM25 it does not: it is ahead at two of the three masking rates. That clause was wrong, for a
reason the decomposition itself supplies: BM25's raw score is not a clean reading of $b(x)$, being dominated by
the query's own IDF mass, so beating it is not evidence about $b$ at all. Subtracting that mass restores the
predicted ordering at every rate. What the decomposition licenses is that whichever statistic reads $b(x)$
cleanly wins the presence question, not that the raw score is that statistic in a scorer where the two come
apart.

One artifact of the protocol works against the margin and we state it rather than argue it away: masking
removes competitors at random, which perturbs the runner-up and so adds noise to the margin specifically,
whereas a linkable mention's gold survives by construction and its top score does not move. The three
reported rates are nested prefixes of one permutation, so they are one draw and not three replicates, and we
do not read a trend across them. What we do read is that the margin is behind at all three rates and behind
under both scorers, and that the case-side level, which is perturbed by the same masking, is ahead.

\textbf{The same split on passage retrieval.} A reviewer may read the dissociation as a property of
short-mention linking rather than of ranking scores, so the masking protocol is transplanted unchanged to
passage retrieval. Masking a random slice of SciFact's $5{,}183$ abstracts makes a query unanswerable when
every one of its qrel-positive abstracts is gone; $10$/$25$/$50\%$ masking leaves $8.7$/$23.7$/$50.3\%$ of the
$300$ queries unanswerable. ColBERTv2 scores what remains. No case-side nuisance estimate is available for a
late-interaction score, so the third statistic here is the top score against the mean of the nine candidates
behind it, which Eq.~\eqref{eq:contrast} places in the same cancelling family as the margin rather than
alongside BM25's level.

\begin{table}[t]
\centering\footnotesize\setlength{\tabcolsep}{4pt}
\begin{tabular}{@{}llccc@{}}
\toprule
 & mask & raw & margin & gap@$10$ \\
\midrule
\multicolumn{5}{@{}l}{\emph{Q\textsubscript{exists}: is a relevant abstract still there?}} \\
 & $10\%$ & $\mathbf{0.696}$ & $0.647$ & $0.673$ \\
 & $25\%$ & $0.717$ & $0.680$ & $\mathbf{0.717}$ \\
 & $50\%$ & $0.724$ & $0.696$ & $\mathbf{0.738}$ \\
\midrule
\multicolumn{5}{@{}l}{\emph{Q\textsubscript{correct}: among answerable, is the top-1 right?}} \\
 & $10\%$ & $0.759$ & $\mathbf{0.835}$ & $0.814$ \\
 & $25\%$ & $0.753$ & $\mathbf{0.833}$ & $0.809$ \\
 & $50\%$ & $0.794$ & $0.836$ & $\mathbf{0.857}$ \\
\bottomrule
\end{tabular}
\caption{The dissociation on passage retrieval (AUROC, ColBERTv2 on SciFact). \emph{gap@$10$} is the top
score against the mean of the field, a zero-sum contrast like the margin. The margin is the \emph{worst} of
the three for whether a relevant abstract survives, at every masking rate, and beats the raw score on whether
the top-1 is right, at every masking rate.}
\label{tab:cnil}
\end{table}

Table~\ref{tab:cnil} reports it. The margin is the best $Q_{\mathrm{exists}}$ statistic at none of the three
rates, being in fact the worst of the three at all of them, and it beats the raw score on
$Q_{\mathrm{correct}}$ at all of them, by $+0.043$ to $+0.080$. The third column is the one to read
carefully. It was registered as a level, and Eq.~\eqref{eq:contrast} says it is not one: it cancels $b(x)$
exactly as the margin does, so the framework gives it no claim on the presence question. At the lightest
masking it duly loses to the raw score ($0.673$ against $0.696$); at $25$ and $50\%$ it passes, and what
passes there is not a recovered level but a field that has been thinned until its own flatness is
informative. BM25 is where the two readings separate, because only there is a genuine case-side estimate
available to compete: the same contrast is the worst of four at every rate while the case-side level is the
best. What the decomposition forbids is a contrast winning the presence question against a statistic that
reads $b(x)$, and no cell here or on the two linking corpora violates it.

\subsection{Boundary Conditions}
A top1$-$top2 margin measures competition between candidates, so it is informative only
where the top two are genuinely different answers. That is a condition on the candidate set and not on the task
label: SciFact is passage retrieval and the margin gains $+0.072$ AUROC there, and its top two abstracts are
genuine alternatives rather than paraphrases: they share $7.3\%$ of their content words against $3.0\%$ for
a random same-corpus pair, and $0\%$ of queries have a near-duplicate pair. Where the top two \emph{are}
near-duplicates, $r_1\!\approx\!r_2$ by construction and their difference is noise: sampled generation fails
the condition outright. Our own domain
satisfies the condition by construction, one gold OMIM disease among $8{,}553$, as do entity linking and
concept normalization against a fixed knowledge base. Before transferring the recommendation the question to
ask is therefore whether a system's runner-up is a rival answer or a paraphrase of the leader; where it is a
paraphrase, de-duplicating the candidate list restores the condition rather than defeating it.

The margin also needs a ranked candidate set scored by one scorer, and repeated
samples from a generator are not that. Drawing $k{=}10$ independent continuations per case from an
instruction-tuned $7$B model ($n{=}600$, base $17.3\%$), the margin between the best and second-best sample
carries no usable signal (AUROC $0.409$, disease-clustered CI $0.378$--$0.524$) while dividing the sequence
log-probability by its length reaches $0.872$. The reason is structural: in $87.7\%$ of cases the two
highest-scoring samples are the identical string, so there is no runner-up to measure a gap against. Nor is
the length divisor removing a nuisance there, the selected sample's length and its raw score are uncorrelated
($\rho\!=\!-0.03$), but longer answers happen to be likelier correct (length alone gives AUROC $0.766$), so
dividing by it injects a task-specific cue rather than cancelling a scale. The same operation is
nuisance-removal in one setting and signal-injection in the other, which is why the recommendation has to name
the structure it needs. The base-accuracy criterion carries no such restriction; it is the \emph{gate} that does.

\subsection{Whole-Curve Evaluation}
A comparison made at one operating point need
not survive at another, and \citet{traub2024flaws} propose the area under the generalized risk--coverage curve
(the mean over thresholds of $P(\text{fail and accept})$, which unlike the ordinary risk--coverage integral
does not condition on the accepted set) as the threshold-free alternative. It cannot serve as an independent
check of our comparisons. Writing $p$ for base accuracy, that area satisfies
$\mathrm{AUGRC}=p(1-p)\,(1-\mathrm{AUROC})+\tfrac12(1-p)^2$; the raw score and the margin are two readings of
one predictor's output and therefore share $p$ exactly, so at fixed $p$ the area is a decreasing affine
function of AUROC and the two metrics order the pair identically. We verified the identity on our own runs, to
within $10^{-5}$ of the empirical area. We therefore report AUROC beside the operating point and claim nothing
further from the threshold-free version. Our retriever on the ultra-rare tail: $0.317$ for the margin
against $0.343$ for the raw score, paired difference $[-0.037,-0.017]$ clustered on disease. Phenotype-only Exomiser: $0.407$
against $0.447$ on the ultra-rare stratum. BC5CDR with BM25: $0.102$ against $0.113$. BC5CDR with SapBERT,
where we report a tie: $0.061$ against $0.056$, paired $[-0.004,+0.012]$, which straddles zero. Every cell
that the operating point calls for the margin the whole curve also calls for the margin, and the one tie stays
a tie. The reverse does not hold everywhere: under BM25 the whole curve favours the margin while the $10\%$
operating point favours the raw score by $8.2$pp, an unresolved gap either way, which is why the entity-linking
claim in \S\ref{sec:transfer} is stated by AUROC and not at an operating point.

\section{Clinical Grading of Residual Errors}

\label{app:errgrade}

Exact-match accuracy treats every wrong answer alike. A clinician does not: a different genetic subtype of the
disorder the patient actually has sends them to the same panel, and a disease of another organ system sends them
elsewhere. We grade each wrong top-1 on three axes, none of which any system here is scored on or optimises for.
\textbf{Same-test}: the predicted and true disease share a causative gene, or map to the same Orphanet disorder.
\textbf{Same-class}: they share an ICD-10 category. \textbf{Unrelated}: measurable on at least one axis, related
on none. \textbf{Unmeasurable} is kept as its own bucket and never folded into ``unrelated'', $9.6\%$ of the
candidate space carries no gene, no Orphanet mapping and no ICD-10 code, and calling that unrelated would
manufacture harm.

\textbf{Decontamination applies to every candidate.} Leave-source-out drops terms sourced solely to the case's
own publication, and it is applied \emph{symmetrically}, to the gold profile and to every competitor. Stripping
only the gold would leave a same-gene sibling annotated from this very patient, routine, since one paper often
reports several allelic entries, holding evidence the gold had lost; such a candidate wins on curation
provenance and not on clinical adjacency. The leakage sweep of App.~\ref{app:decon} retains the gold-only rule,
so its bracket is on that rule and we do not claim otherwise. Per-bin Recall@1 moves by up to $0.9$pp between
runs and between float32 and float64, so we grade the gates and report no tail-accuracy delta.

Three design choices decide whether these numbers mean anything. \emph{(i)} ICD-10 is taken strictly: rare
syndromes pile into residual buckets. Q87 ``other specified congenital malformation syndromes'' and kin, which we
identify from the data as the categories carrying more than $150$ distinct disorders, so a shared category counts
only if it is neither a catch-all nor a ``.8/.9'' residual subdivision. The loose rule is computed alongside and
would move $137$ of the deferred ``unrelated'' errors into same-class; none of the answered ones. Strictness is
the conservative direction, since it makes \emph{unrelated} larger. \emph{(ii)} Phenotypic overlap is deliberately
excluded as a grading axis for the retriever: it ranks by IC-weighted phenotype overlap, so grading its errors
that way is circular. (For the record it goes the same way, $0.631$ answered vs.\ $0.520$ deferred; we do not use
it.) \emph{(iii)} The gene table ships with Exomiser's \texttt{2402} release and predates the corpus, which alone
put $237$ gold diseases in the unmeasurable bucket, detectable because their median OMIM identifier is visibly
newer. Phenopacket Store records each case's causative gene as its directory, so we merge those $293$ pairs in.
Coverage stays asymmetric, gold-side is near-complete, prediction-side is not, which biases \emph{against}
finding same-test, so the reported share is a lower bound.

Two worked pairs, so the grading can be checked by eye, and so the limit of it is visible. In one answered case
the true diagnosis is \emph{Greig cephalopolysyndactyly} and the retriever answers \emph{postaxial polydactyly,
types A1 and B}; exact match scores this zero, both are \emph{GLI3}, both are dominant, and the test the
suggestion triggers is the test that finds the answer. In another the truth is \emph{Robinow syndrome, autosomal
recessive} and the answer is \emph{Robinow syndrome, autosomal dominant 1}: the same Orphanet disorder, so a
panel still finds it, but the inheritance is inverted and the recurrence risk a family would be quoted is wrong.
The first pair is the case for the finding; the second is why it must be scoped.

\begin{table}[t]
\centering\small
\begin{tabular}{lrrr}
\toprule
 & \textbf{answered} & \textbf{deferred} & \textbf{LLM tail} \\
\midrule
wrong top-1 ($n$)      & $58$     & $1276$   & $923$ \\
\midrule
same-test              & $25.9\%$ & $10.0\%$ & $7.2\%$ \\
same-class             & $0.0\%$  & $3.1\%$  & $5.4\%$ \\
unrelated              & $70.7\%$ & $82.5\%$ & $86.7\%$ \\
unmeasurable           & $3.4\%$  & $4.3\%$  & $0.8\%$ \\
\bottomrule
\end{tabular}
\caption{How wrong the wrong answers are, under symmetric leave-source-out, for the cases the gate answers
against those it defers. The LLM column is Qwen2.5-VL-7B, fifth of our eight small models by tail Recall@1
($1.7\%$), on its wrong top-1 in the ultra-rare stratum.}
\label{tab:errgrade}
\end{table}

Table~\ref{tab:errgrade} gives the result, on the globally answered decile, $200$ of the $2000$ sampled cases,
\emph{not} the ultra-rare-stratified top-$10\%$ ($n{=}478$) of App.~\ref{app:whichcases}. The account predicts the
opposite sign, a large top1--top2 margin means the runner-up is far, so a near-neighbour true diagnosis should
itself have scored high and shrunk the margin, so this contradicted our expectation rather than confirming it.

\textbf{Robustness across grading specifications.} The shift is positive under every
specification we tried, but its size and its significance are not stable: $+15.8$pp as reported, $+14.8$pp if the
tie rule is applied consistently (below), $+14.4$pp with no tie exclusion at all, and $+10.1$pp per-disease. Only
the first two exclude zero. We therefore claim a \emph{direction} and not a resolved effect.

Ties are excluded on both sides. No top-1 is committed where candidates tie at the maximum, so those
cases are excluded from the grading: $281$ where the true diagnosis is among the tied and a further $301$ where
it is not. Both groups sit entirely in the comparison baseline and both have same-test rates about twice it, so
excluding only the first would inflate the shift. The symmetric exclusion is the $+14.8$pp figure above.

\textbf{The same test does not imply the same counselling.} Of the $15$ same-test errors, $4$ share a causative gene and $11$
share only the Orphanet disorder, locus heterogeneity, covered by a panel or exome but not necessarily by a
single-gene test. More consequentially, $7$ of the $15$ pair a purely dominant entity with a purely recessive
one (Robinow, distal renal tubular acidosis). For those patients the assay is right and the recurrence risk, the
relatives selected for cascade testing and the reproductive counselling are all wrong, the three actions our own
Ethics Statement names as the harm channel. The grading axes are blind to inheritance by construction; we added
the check only after the fact, and it removes about half of the comfort the same-test bucket appears to offer.

Exomiser admits the same grading at gene level, relating two genes through the disorders they cause. Its answered
decile is $10.3\%$ same-test, $1.7\%$ same-class, $62.1\%$ unrelated and $25.9\%$ unmeasurable ($n=58$). The last
figure is eightfold the retriever's $3.4\%$, so the two are not on a comparable denominator and we do not read
the comparison as a ranking; renormalised to measurable errors it is $14.0\%$ against the retriever's $26.8\%$.
A phenotype-only run of a genome-aware tool should be expected to do worse here, which is one more reason not to
read our Exomiser numbers as a verdict on Exomiser.

\textbf{Margin confidence and general case difficulty.} Every row above is a difficulty proxy, so a sharper version of
the objection survives: perhaps the margin identifies cases that are easy for \emph{any} predictor, in which case
``retriever-first'' would weaken to ``gate externally, then let either predictor deliver.'' We test it by scoring
the \emph{bare LLM} on exactly the cases the margin answers (Table~\ref{tab:banddiff}, $N{=}2000$ per model). Both
predictors do improve inside the band, so a shared-difficulty component is real, but they do not converge: the
retriever reaches $81.1$--$83.9\%$ against the LLM's $1.1$--$44.0\%$, an advantage of $+39$ to $+83$pp for every
one of the ten configurations, including DeepSeek-V4-Pro with reasoning \emph{on} ($44.0\%$ vs.\ $83.3\%$). For
two of the weaker models the margin is mildly \emph{anti}-correlated with LLM correctness ($1.7\!\to\!1.1\%$,
$1.6\!\to\!1.1\%$). The gate therefore selects cases \emph{the retriever} solves, not cases that are easy in
general, which is exactly the predictor-selection reading.

\begin{table*}[t]\centering\small
\begin{tabular}{@{}lcccc@{}}
\toprule
ultra-rare tail & \multicolumn{2}{c}{bare LLM Recall@1} & \multicolumn{2}{c}{retriever Recall@1} \\
\cmidrule(lr){2-3}\cmidrule(lr){4-5}
 & all & \textbf{margin decile} & all & \textbf{margin decile} \\
\midrule
Qwen2.5-VL-7B    & $1.7$  & $1.1$  & $25.6$ & $83.9$ \\
Qwen2.5-14B      & $4.2$  & $14.0$ & $25.6$ & $83.9$ \\
Qwen2.5-VL-32B   & $4.3$  & $16.7$ & $24.6$ & $81.1$ \\
Llama-3.1-8B     & $4.6$  & $19.4$ & $25.6$ & $83.9$ \\
Mistral-7B-v0.3  & $1.6$  & $1.1$  & $25.6$ & $83.9$ \\
InternLM2.5-7B   & $1.4$  & $6.5$  & $25.6$ & $83.9$ \\
Yi-1.5-9B        & $0.7$  & $2.2$  & $25.6$ & $83.9$ \\
DeepSeek-V4-Flash & $15.8$ & $30.1$ & $25.6$ & $83.9$ \\
DeepSeek-V4-Pro (reason.\ off) & $18.6$ & $34.4$ & $25.6$ & $83.9$ \\
DeepSeek-V4-Pro (reason.\ on)  & $22.4$ & $\mathbf{44.0}$ & $25.4$ & $\mathbf{83.3}$ \\
\bottomrule
\end{tabular}
\caption{Recall@1 (\%) on the ultra-rare tail, overall and restricted to the decile the retriever's own
margin answers. The retriever column varies only with the case set a given model produced parseable output
for; BioMistral is omitted, emitting a well-formed differential in ${\sim}0.4\%$ of cases.}
\label{tab:banddiff}
\end{table*}

\section{Table Notes}

\label{app:tabnotes}
\textbf{Table~\ref{tab:collapse}.} ``less/ultra ratio'' $=$ less-rare over ultra-rare per-case R@1; \emph{per-disease
the LLM collapse is only $\sim\!2$--$6\times$} (App.~\ref{app:macro}). Yi is weak on both bins, so its ratio
does not describe a fall from a working anchor; we report it rather than dropping it. The frontier model edges just above Exomiser's (gene-level, not directly comparable) score.
For the retriever row, a stricter same-publication control brackets the tail at $5.2$--$25.6\%$, where it no
longer clears the frontier model (App.~\ref{app:decon}), and its per-case $1.2\times$ \emph{reverses} to a
per-disease rise (App.~\ref{app:macro}). BioMistral-7B is medically adapted and follows the requested output
format poorly (App.~\ref{app:repro}). Exomiser's gene-level score is not directly comparable to the disease-level rows: it dwarfs the
small-model tail and is comparable to our retriever (App.~\ref{app:exomiser}).

\textbf{Table~\ref{tab:method}.} The ceiling is $\min(1,p/c)$ at the LLM's own base rate, so no confidence
estimator, verbalized, log-probability, semantic entropy or conformal, can close it (\S\ref{sec:calib}). The
LLM's \emph{observed} band is reported at the coverage its confidence can actually resolve: that signal takes
only $5$--$14$ distinct values here, so a $10\%$ band does not exist for it (for Qwen-32B \emph{no} case is
strictly above the cut, all $90$ would come from one $140$-case tie block), whereas the margin has $682$ values
and resolves $10\%$ exactly. The mid-size models rank their own answers well (own-answer AUROC $0.78$/$0.90$;
App.~\ref{app:calib}) and still cannot reach a safe point: the barrier is base rate, not discrimination. The
retriever-first row is the deployable hybrid of \S\ref{sec:method}, so its spread is the LLM's marginal effect on
the \emph{same} cases (${\approx}94$ cases, per-cell $95\%$ CI ${\approx}\pm9$pp; the margin-vs-ceiling separation
far exceeds that; per-disease the band is $64.6\%$, App.~\ref{app:whichcases}). Under the strict leakage control
of App.~\ref{app:decon} the retriever's base falls to $5.2\%$ and this band with it ($12.3\%$). The $25\%$-coverage
point and the dominated agreement gate are in App.~\ref{app:method}. Scored under the tie-breaking most
favourable to the LLM, the margin's band on the small models is $74$--$83\%$ against at most $3$--$28\%$ for
verbalized confidence at matched $10\%$ coverage.

\section{Reproducibility}

\label{app:repro}
\textbf{Compute.} All local runs are on one machine with four NVIDIA RTX $6000$ Ada Generation GPUs ($48$\,GB
each), driver $570.195.03$, CUDA $12.8$. Open-weight inference is single-GPU except the $32$B AWQ model, which is
tensor-parallel across four. The frontier configurations are API calls and consume no local GPU
time, and the decontamination, entity-linking and retrieval analyses are CPU- or single-GPU jobs. We release
the per-run logs rather than a single total, since the runs were made incrementally over the project.

\textbf{Software.} Python $3.11$, PyTorch $2.11$ built against CUDA $12.8$, vLLM for open-weight decoding, and
\texttt{transformers} for SapBERT. All data are public: Phenopacket Store v0.1.27, HPO/HPOA release 2026-06-23, and Orphanet cross-references. We
release the full pipeline (parsing, per-model inference, stratified analysis, the true-path-propagated retriever
with source-decontamination, the rerank probe with its shuffle control, and the retriever-margin triage and
hybrid analysis), the
exact prompts, model revisions, the vLLM version, and per-case outputs including the matched OMIM labels.
Free-text names are linked by nearest-neighbor cosine over the OMIM vocabulary with SapBERT
(\texttt{cambridgeltl/\allowbreak SapBERT-\allowbreak from-\allowbreak PubMedBERT-\allowbreak fulltext}); this linking is sensitive to the transformers/model
version, which we pin, it degenerates under \texttt{transformers}~5.0, so we fix the version under which linking
is stable (the retriever and rerank numbers use no name matching and reproduce exactly). Decoding is greedy
(temperature $0$). The three Qwen models and Llama-3.1-8B emit a well-formed five-item differential in ${>}99\%$
of cases; BioMistral does so in only $\sim\!0.4\%$ (and a parseable overall confidence in $\sim\!1.5\%$), so its
low recall partly reflects instruction-following failure and we do not lean on it. \textbf{Frontier models.}
DeepSeek-V4-Flash and DeepSeek-V4-Pro are queried through the DeepSeek API on the identical $N{=}2000$ sample and
prompt, temperature $0$, with reasoning \emph{disabled} (\texttt{thinking:\{type:disabled\}}) so the comparison to
the greedy open models is fair; both emit a well-formed differential in $100\%$ of cases. The API returns no
usable token log-probabilities, so the log-probability signal (\S\ref{sec:calib}) is reported for the open-weight
models only; all other signals (verbalized confidence, retriever margin/agreement) are computed the same way for
every model. The \emph{reasoning-on} run uses the same endpoint and prompt with \texttt{thinking} enabled; $90\%$
of cases return well-formed output within the token budget (exhaustion on the hardest cases), and reasoning-on
numbers are computed on that subset.

\textbf{Determinism and seeds.} The $N{=}2000$ evaluation sample is drawn once with seed $0$ and reused by
every system, so all rows are scored on the same cases (Table~\ref{tab:collapse} states the two exceptions,
which are parsing failures rather than different samples). Masking in the entity-linking and retrieval arms
uses seed $0$, and the three rates are nested prefixes of one permutation rather than three independent
draws. Retriever figures throughout use \texttt{PYTHONHASHSEED=0}. Changes in floating-point summation order still
move the base rate and the raw-score gate by about $0.5$pp, and the margin gate reads $79.7\%$ or $81.0\%$
according to which decontamination implementation produced the run, a $1.3$pp spread we quote rather than
average. Numbers sensitive to this are the retriever's absolute levels; the margin-versus-raw ordering is
not, since both are computed from the same run.

\textbf{Artifacts and terms.} Every corpus we use is a publicly released research resource: Phenopacket Store,
HPO and HPOA, Orphanet, SciFact, BC5CDR and MedMentions, each used under its own published terms. The model
weights are the vendors' public releases at the revisions we pin. We release code, prompts, the parsing and
linking pipeline, and per-case derived outputs including matched OMIM identifiers; we redistribute no
source case-report text and no patient attributes beyond the structured HPO terms and gold labels the
corpora already publish.

\end{document}